\newcommand{\Stable}{\mathcal{X}_s}
\newtheorem*{theorem*}{Theorem}
\newcommand{\loss}{\mathcal{L}}
\newcommand{\support}{\mathsf{support}}
\renewcommand{\tilde}{\widetilde}
\newcommand{\eps}{\epsilon}
\begin{document}
\title{Learning Provably Stabilizing Neural Controllers for Discrete-Time Stochastic Systems\thanks{This work was supported in part by the ERC-2020-AdG 101020093, ERC CoG 863818 (FoRM-SMArt) and the European Union’s Horizon 2020 research and innovation programme under the Marie Skłodowska-Curie Grant Agreement No.~665385.}}
\titlerunning{Learning Provably Stabilizing Neural Controllers}
%
\author{Matin Ansaripour\inst{1} \and
Krishnendu Chatterjee\inst{2} \and
Thomas A. Henzinger\inst{2} \and \\
Mathias Lechner\inst{3} \and
\DJ{}or\dj{}e \v{Z}ikeli\'c \inst{2}}
\authorrunning{M.~Ansaripour, K.~Chatterjee, T.~A.~Henzinger, M.~Lechner, \DJ{}.~\v{Z}ikeli\'c}
%
\institute{Sharif University of Technology, Tehran, Iran \and
Institute of Science and Technology Austria (ISTA), Klosterneuburg, Austria \and
Massachusetts Institute of Technology, Boston, USA}
\maketitle              

\begin{abstract}
We consider the problem of learning control policies in discrete-time stochastic systems which guarantee that the system stabilizes within some specified stabilization region with probability~$1$. Our approach is based on the novel notion of stabilizing ranking supermartingales (sRSMs) that we introduce in this work. Our sRSMs overcome the limitation of methods proposed in previous works whose applicability is restricted to systems in which the stabilizing region cannot be left once entered under any control policy. We present a learning procedure that learns a control policy together with an sRSM that formally certifies probability~$1$ stability, both learned as neural networks. We show that this procedure can also be adapted to formally verifying that, under a given Lipschitz continuous control policy, the stochastic system stabilizes within some stabilizing region with probability~$1$. Our experimental evaluation shows that our learning procedure can successfully learn provably stabilizing policies in practice.

\keywords{Learning-based control \and
Stochastic systems \and
Martingales \and
Formal verification \and
Stabilization.}
\end{abstract}
\section{Introduction}
\label{sec:intro}

Machine learning based methods and in particular reinforcement learning (RL) present a promising approach to solving highly non-linear control problems. This has sparked interest in the deployment of learning-based control methods in safety-critical autonomous systems such as self-driving cars or healthcare devices. However, the key challenge for their deployment in real-world scenarios is that they do not consider hard safety constraints. For instance, the main objective of RL is to maximize expected reward~\cite{sutton2018reinforcement}, but doing so provides no guarantees of the system's safety. A more recent paradigm in safe RL considers constrained Markov decision processes (cMDPs)~\cite{altman1999constrained,Geibel06,uchibe2007constrained,achiam2017constrained,ChowNDG18}, which are equiped with both a reward function and an auxiliary cost function. The goal of these works is then to maximize expected reward while keeping expected cost below some tolerable threshold. While these methods do enhance safety, they only ensure empirically that the expected cost function is below the threshold and do not provide any formal guarantees on constraint satisfaction.

This is particularly concerning for safety-critical applications, in which unsafe behavior of the system might have fatal consequences. Thus, a fundamental challenge for deploying learning-based methods in safety-critical autonomous systems applications is {\em formally certifying} safety of learned control policies~\cite{AmodeiOSCSM16,GarciaF15}.

Stability is a fundamental safety constraint in control theory, which requires the system to converge to and eventually stay within some specified stabilizing region with probability~$1$, a.k.a.~almost-sure (a.s.) asymptotic stability~\cite{khalil2002nonlinear,kushner1965stability}. 
Most existing research on learning policies for a control system with formal guarantees on stability considers {\em deterministic} systems and employs Lyapunov functions~\cite{khalil2002nonlinear} for certifying the system's stability.
In particular, a Lyapunov function is learned jointly with the control policy~\cite{BerkenkampTS017,RichardsB018,ChangRG19,AbateAGP21}. 
Informally, a Lyapunov function is a function that maps system states to nonnegative real numbers whose value decreases after every one-step evolution of the system until the stabilizing region is reached. 
Recently,~\cite{lechner2021stability} proposed a learning procedure for learning {\em ranking supermartingales (RSMs)}~\cite{ChakarovS13} for certifying a.s.~asymptotic stability in discrete-time stochastic systems. RSMs generalize Lyapunov functions to supermartingale processes in probability theory~\cite{Williams91} and decrease in value in {\em expectation} upon every one-step evolution of the system. 

While these works present significant advances in learning control policies with formal stability guarantees as well as formal stability verification, they are either only applicable to deterministic systems or assume that the stabilizing set is {\em closed under system dynamics}, i.e., the agent cannot leave it once entered. 
In particular, the work of \cite{lechner2021stability} reduces stability in stochastic systems to an {\em a.s.~reachability} condition by assuming that the agent cannot leave the stabilization set. However, this assumption may not hold in real-world settings because the agent may be able to leave the stabilizing set with some positive probability due to the existence of stochastic disturbances, see Figure~\ref{fig:srsmexample}. We illustrate the importance of relaxing this assumption on the classical example of balancing a pendulum in the upright position, which we also study in our experimental evaluation. The closedness under system dynamics assumption implies that, once the pendulum is in an upright position, it is ensured to stay upright and not move away. However, this is not a very realistic assumption due to possible existence of minor disturbances which the controller needs to balance out. The closedness under system dynamics assumption essentially assumes the existence of a balancing control policy which takes care of this problem. In contrast, our method does not assume such a balancing policy and learns a control policy which ensures that both (1)~the pendulum reaches the upright position and (2)~that the pendulum eventually stays upright with probability~1.

While the removal of the assumption that a stabilizing region cannot be left may appear to be a small improvement, in formal methods this is well-understood to be a significant and difficult step.
With the assumption, the desired controller has an a.s.~reachability objective.
Without the assumption, the desired controller has an a.s.~persistence (or co-B\"uchi) objective, namely, to reach and stay in the stabilizing region with probability~$1$.
Verification or synthesis for reachability conditions allow in general much simpler techniques than verification or synthesis for persistence conditions.
For example, in non-stochastic systems, reachability can be expressed in alternation-free $\mu$-calculus (i.e., fixpoint computation), whereas persistence requires alternation (i.e., nested fixpoint computation).
Technically, reachability conditions are found on the first level of the Borel hierarchy, while persistence conditions are found on the second level~\cite{ChangMP92}.
It is, therefore, not surprising that also over continuous and stochastic state spaces, reachability techniques are insufficient for solving persistence problems.

In this work, we present the following three contributions. 
\begin{compactenum}
\item \textbf{Theoretical contributions} 
In this work, we introduce {\em stabilizing ranking supermartingales (sRSMs)} and prove that they certify a.s.~asymptotic stability in discrete-time stochastic systems even when the stabilizing set is not assumed to be closed under system dynamics. The key novelty of our sRSMs compared to RSMs is that they also impose an expected decrease condition within a part of the stabilizing region. 
The additional condition ensures that, once entered, the agent leaves the stabilizing region with probability at most $p<1$. Thus, we show that the probability of the agent entering and leaving the stabilizing region $N$ times is at most $p^N$, which by letting $N\rightarrow\infty$ implies that the agent eventually stabilizes within the region with probability $1$. The key conceptual novelty is that we combine the convergence results of RSMs which were also exploited in~\cite{lechner2021stability} with a {\em concentration bound} on the supremum value of a supermartingale process. This combined reasoning allows us to formally guarantee a.s.~asymptotic stability even for systems in which the stabilizing region is not closed under system dynamics. We remark that our proof that sRSMs certify a.s.~asymptotic stability is not an immediate application of results from martingale theory, but that it introduces a novel method to reason about eventual stabilization within a set. We present this novel method in the proof of Theorem~\ref{thm:soundness}. 
Finally, we show that sRSMs not only present qualitative results to certify a.s. asymptotic stability but also present quantitative upper bounds on the number of time steps that the system may spend outside of the stabilization set prior to stabilization.

\item \textbf{Algorithmic contributions} 
Following our theoretical results on sRSMs, we present an algorithm for learning a control policy jointly with an sRSM that certifies a.s.~asymptotic stability. The method parametrizes both the policy and the sRSM as neural networks and draws insight from established procedures for learning neural network Lyapunov functions \cite{ChangRG19} and RSMs \cite{lechner2021stability}. 
It loops between a learner module that jointly trains a policy and an sRSM candidate and a verifier module that certifies the learned sRSM candidate by formally checking whether all sRSM conditions are satisfied.
If the sRSM candidate violates some sRSM conditions, the verifier module produces counterexamples that are added to the learner module's training set to guide the learner in the next loop iteration. Otherwise, if the verification is successful and the algorithm outputs a policy, then the policy guarantees a.s.~asymptotic stability. By fixing the control policy and only learning and verifying the sRSM, our algorithm can also be used to verify that a given control policy guarantees a.s.~asymptotic stability. This verification procedure only requires that the control policy is a Lipschitz continuous function.

\item \textbf{Experimental contributions} 
We experimentally evaluate our learning procedure on $2$ stochastic RL tasks in which the stabilizing region is not closed under system dynamics and show that our learning procedure successfully learns control policies with a.s.~asymptotic stability guarantees for both tasks.
\end{compactenum}

\paragraph{Organization} The rest of this work is organized as follows. Section~\ref{sec:prelims} contains preliminaries. In Section~\ref{sec:theory}, we introduce our novel notion of stabilizing ranking supermartingales and prove that they provide a sound certificate for a.s.~asymptotic stability, which is the main theoretical contribution of our work. In Section~\ref{sec:algo}, we present the learner-verifier procedure for jointly learning a control policy together with an sRSM that formally certifies a.s.~asymptotic stability. In Section~\ref{sec:experiments}, we experimentally evaluate our approach. We survey related work in Section~\ref{sec:relatedwork}. Finally, we conclude in Section~\ref{sec:limitconclusion}.


\section{Preliminaries}
\label{sec:prelims}

	We consider a discrete-time stochastic dynamical system of the form
    \[ \mathbf{x}_{t+1} = f(\mathbf{x}_t, \pi(\mathbf{x}_t), \omega_t), \]
    where $f:\mathcal{X}\times\mathcal{U}\times\mathcal{N}\rightarrow\mathcal{X}$ is a dynamics function, $\pi:\mathcal{X}\rightarrow\mathcal{U}$ is a control policy and $\omega_t \in \mathcal{N}$ is a stochastic disturbance vector. Here, we use $\mathcal{X}\subseteq\mathbb{R}^n$ to denote the state space, $\mathcal{U}\subseteq\mathbb{R}^m$ the action space and $\mathcal{N}\subseteq\mathbb{R}^p$ the stochastic disturbance space  of the system. In each time step, $\omega_t$ is sampled according to a probability distribution $d$ over $\mathcal{N}$, independently from the previous samples.
    
    A sequence $(\mathbf{x}_t,\mathbf{u}_t,\omega_t)_{t\in\mathbb{N}_0}$  of state-action-disturbance triples is a trajectory of the system, if $\mathbf{u}_t=\pi(\mathbf{x}_t)$, $\omega_t\in\support(d)$ and $\mathbf{x}_{t+1}=f(\mathbf{x}_t,\mathbf{u}_t,\omega_t)$ hold for each $t\in\mathbb{N}_0$. For each state $\mathbf{x}_0\in\mathcal{X}$, the system induces a Markov process and defines a probability space over the set of all trajectories that start in $\mathbf{x}_0$~\cite{Puterman94}, with the probability measure and the expectation operators $\mathbb{P}_{\mathbf{x}_0}$ and~$\mathbb{E}_{\mathbf{x}_0}$.
    
    \paragraph{Assumptions} The state space $\mathcal{X}\subseteq\mathbb{R}^n$, the action space $\mathcal{U}\subseteq\mathbb{R}^m$ and the stochastic disturbance space $\mathcal{N}\subseteq\mathbb{R}^p$ are all assumed to be Borel-measurable. Furthermore, we assume that the system has a {\em bounded maximal step size} under any policy $\pi$, i.e.~that there exists $\Delta > 0$ such that for every $\mathbf{x}\in\mathcal{X}$, $\omega\in \mathcal{N}$ and policy $\pi$ we have $||\mathbf{x} - f(\mathbf{x}, \pi(\mathbf{x}), \omega)||_1 \leq \Delta$. Note that this is a realistic assumption that is satisfied in many real-world scenarios, e.g.~a self-driving car can only traverse a certain maximal distance within each time step whose bounds depend on the maximal speed that the car can develop.
    
    For our learning procedure in Section~\ref{sec:algo}, we assume that $\mathcal{X}\subseteq\mathbb{R}^n$ is compact and that $f$ is Lipschitz continuous, which are common assumptions in control theory. Given two metric spaces $(X, d_X)$ and $(Y, d_Y)$, a function $g: X \rightarrow Y$ is said to be {\em Lipschitz continuous} if there exists a constant $L>0$ such that for every $x_1,x_2 \in X$ we have $d_Y(g(x_1), g(x_2)) \leq L \cdot d_X(x_1, x_2)$. We say that $L$ is a Lipschitz constant of $g$. For the verification procedure when the control policy $\pi$ is given, we also assume that $\pi$ is Lipschitz continuous. This is also a common assumption in control theory and RL that allows for a rich class of policies including neural network policies, as all standard activation functions such as ReLU, sigmoid or tanh are Lipschitz continuous~\cite{SzegedyZSBEGF13}. Finally, in Section~\ref{sec:algo} we assume that the stochastic disturbance space $\mathcal{N}$ is bounded or that $d$ is a product of independent univariate distributions, which is needed for efficient sampling and expected value computation.
    
    \paragraph{Almost-sure asymptotic stability} There are several notions of stability in stochastic systems. In this work, we consider the notion of almost-sure asymptotic stability~\cite{kushner1965stability}, which requires the system to eventually {\em converge and stay within} the stabilizing set. In order to define this formally, for each $\mathbf{x}\in\mathcal{X}$ let $d(\mathbf{x},\Stable)=\inf_{\mathbf{x_s}\in\Stable}||\mathbf{x}-\mathbf{x}_s||_1$, where $||\cdot||_1$ is the $l_1$-norm on $\mathbb{R}^m$.
    
    \begin{definition}
    A Borel-measurable set $\Stable\subseteq\mathcal{X}$ is {\em almost-surely (a.s.) asymptotically stable}, if for each initial state $\mathbf{x}_0\in\mathcal{X}$ we have
    \[ \mathbb{P}_{\mathbf{x}_0}\Big[ \lim_{t\rightarrow\infty}d(\mathbf{x}_t,\Stable) = 0 \Big] = 1. \]
    \end{definition}
    
    The above definition slightly differs from that of~\cite{kushner1965stability} which considers the special case of a singleton $\Stable=\{\mathbf{0}\}$. The reason for this difference is that, analogously to~\cite{lechner2021stability} and to the existing works on learning stabilizing policies in deterministic systems~\cite{BerkenkampTS017,RichardsB018,ChangRG19}, we need to consider stability with respect to an open neighborhood of the origin for learning to be numerically stable.


\section{Theoretical Results}\label{sec:theory}

We now introduce our novel notion of stabilizing ranking supermartingales (sRSMs). We then show that sRSMs can be used to formally certify a.s.~asymptotic stability with respect to a fixed policy $\pi$ {\em without} requiring that the stabilizing set is closed under system dynamics. To that end, in this section we assume that the policy $\pi$ is fixed.
In the next section, we will then present our learning procedure.

\paragraph{Prior work -- ranking supermartingales (RSMs)} In order to motivate our sRSMs and to explain their novelty, we first recall ranking supermartingales (RSMs)~\cite{ChakarovS13} that were used in~\cite{lechner2021stability} for certifying a.s.~asymptotic stability under a given policy $\pi$, when the stabilizing set is assumed to be closed under system dynamics. If the stabilizing set is assumed to be closed under system dynamics, then a.s.~asymptotic stability of $\Stable$ is equivalent to {\em a.s.~reachability} since the agent cannot leave $\Stable$ once entered. 

Intuitively, an RSM is a non-negative continuous function $V:\mathcal{X} \rightarrow \mathbb{R}$ whose value at each state in $\mathcal{X}\backslash \Stable$ strictly decreases in expected value by some $\epsilon>0$ upon every one-step evolution of the system under the policy $\pi$.

\begin{definition}[Ranking supermartingales~\cite{ChakarovS13,lechner2021stability}]\label{def:rsm}
A continuous function $V: \mathcal{X} \rightarrow \mathbb{R}$ is a {\em ranking supermartingale (RSM) for $\Stable$} if $V(\mathbf{x})\geq 0$ for each $\mathbf{x}\in\mathcal{X}$ and if there exists $\epsilon > 0$ such that for each $\mathbf{x}\in\mathcal{X}\backslash \Stable$ we have
\[ \mathbb{E}_{\omega\sim d}\Big[ V ( f(\mathbf{x}, \pi(\mathbf{x}), \omega) ) \Big] \leq V(\mathbf{x}) - \epsilon.\]
\end{definition}

It was shown that, if a system under policy $\pi$ admits an RSM {\em and} the stabilizing set $\Stable$ is assumed to be closed under system dynamics, then $\Stable$ is a.s.~asymptotically stable. The intuition behind this result is that $V$ needs to strictly decrease in expected value until $\Stable$ is reached while remaining bounded from below by $0$. Results from martingale theory can then be used to prove that the agent must eventually converge and reach $\Stable$ with probability $1$, due to a strict decrease in expected value by $\epsilon>0$ outside of $\Stable$ which prevents convergence to any other state. However, apart from nonnegativity, the defining conditions on RSMs do not impose any conditions on the RSM once the agent reaches $\Stable$. In particular, if the stabilizing set $\Stable$ is {\em not} closed under system dynamics, then the defining conditions of RSMs do not prevent the agent from leaving and reentering $\Stable$ infinitely many times and thus never stabilizing. In order to formally ensure stability, the defining conditions of RSMs need to be strengthened and in the rest of this section we solve this problem.

\paragraph{Our new certificate -- stabilizing ranking supermartingales (sRSMs)} We now define our sRSMs, which certify a.s.~asymptotic stability even when the stabilizing set is not assumed to be closed under system dynamics and thus overcome the limitation of RSMs of~\cite{lechner2021stability} that was discussed above. Recall, we use $\Delta$ to denote the maximal step size of the system.


\begin{definition}[Stabilizing ranking supermartingales]\label{def:stabilizingrsm}
Let $\epsilon, M, \delta > 0$. A Lipschitz continuous function $V: \mathcal{X} \rightarrow \mathbb{R}$ is said to be an {\em $(\epsilon,M,\delta)$-stabilizing ranking supermartingale ($(\epsilon,M,\delta)$-sRSM) for $\Stable$} if the following conditions hold:
\begin{compactenum}
    \item {\em Nonnegativity.} $V(\mathbf{x})\geq 0$ holds for each $\mathbf{x}\in\mathcal{X}$.
    \item {\em Strict expected decrease if $V\geq M$.} For each $\mathbf{x}\in\mathcal{X}$, if $V(\mathbf{x}) \geq M$ then
    \begin{equation*}
    \mathbb{E}_{\omega\sim d}\Big[ V \Big( f(\mathbf{x}, \pi(\mathbf{x}), \omega) \Big) \Big] \leq V(\mathbf{x}) - \epsilon.
    \end{equation*}
    \item {\em Lower bound outside $\Stable$.} $V(\mathbf{x})\geq M + L_V\cdot\Delta + \delta$ holds for each $\mathbf{x}\in\mathcal{X}\backslash\Stable$, where $L_V$ is a Lipschitz constant of $V$.
\end{compactenum}
\end{definition}

\begin{figure*}
    \centering
    \includegraphics[width=\textwidth]{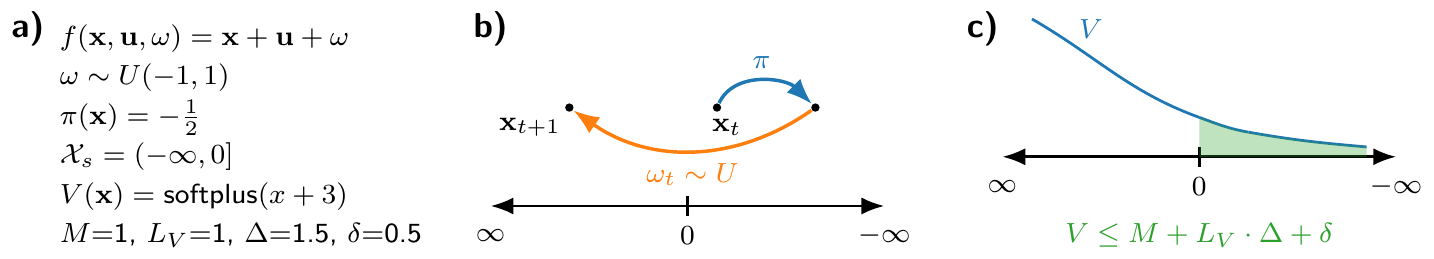}
    \caption{Example of a $1$-dimensional stochastic dynamical system for which the stabilizing set $\Stable$ is not closed under system dynamics since from every system state any other state is reachable with positive probability. \textbf{a)} System definition and an sRSM that it admits. \textbf{b)} Illustration of a single time step evolution of the system. \textbf{c)} Visualization of the sRSM and the corresponding level set used to bound the probability of leaving the stabilizing region. }
    \label{fig:srsmexample}
\end{figure*}

An example of an sRSM for a $1$-dimensional stochastic dynamical system is shown in Figure~\ref{fig:srsmexample}. The intuition behind our new conditions is as follows. Condition~$2$ in Definition~\ref{def:stabilizingrsm} requires that, at each state in which $V\geq M$, the value of $V$ decreases in expectation by $\epsilon>0$ upon one-step evolution of the system. As we show below, this ensures probability~$1$ convergence to the set of states $S = \{\mathbf{x}\in\mathcal{X} \mid V(\mathbf{x})\leq M\}$ from any other state of the system. On the other hand, condition~$3$ in Definition~\ref{def:stabilizingrsm} requires that $V\geq M + L_V\cdot\Delta + \delta$ outside of the stabilizing set $\Stable$, thus $S\subseteq\Stable$. Moreover, if the agent is in a state where $V\leq M$, the value of $V$ in the next state has to be $\leq M + L_V\cdot\Delta$ due to Lipschitz continuity of $V$ and $\Delta$ being the maximal step size of the system. Therefore, even if the agent leaves $S$, for the agent to actually leave $\Stable$ the value of $V$ has to {\em increase} from a value $\leq M + L_V\cdot\Delta$ to a value $\geq M + L_V\cdot\Delta + \delta$ while satisfying the strict expected {\em decrease} condition imposed by condition $2$ in Definition~\ref{def:stabilizingrsm} at every intermediate state that is not contained in $S$. The following theorem is the main result of this section. 
\begin{theorem}\label{thm:soundness}
If there exist $\epsilon, M, \delta > 0$ and an $(\epsilon,M,\delta)$-sRSM for $\Stable$, then $\Stable$ is a.s.~asymptotically stable.
\end{theorem}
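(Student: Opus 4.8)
The plan is to combine the ranking-supermartingale convergence argument of \cite{lechner2021stability} with a maximal-inequality concentration bound, organised around the sublevel set $S = \{\mathbf{x}\in\mathcal{X} \mid V(\mathbf{x})\le M\}$. Note first that, by condition~$3$ of Definition~\ref{def:stabilizingrsm} read contrapositively, $S\subseteq\Stable$ and $\mathcal{X}\setminus\Stable\subseteq\{\mathbf{x}\mid V(\mathbf{x})\ge M + L_V\Delta + \delta\}$. It therefore suffices to show that, for every initial state $\mathbf{x}_0$, almost surely the trajectory leaves $\Stable$ only finitely many times: once it remains in $\Stable$ forever we have $d(\mathbf{x}_t,\Stable)=0$ for all large $t$, hence $d(\mathbf{x}_t,\Stable)\to 0$.

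\emph{Step 1 (convergence to $S$).} Let $\tau=\inf\{t\mid \mathbf{x}_t\in S\}$. On $\{t<\tau\}$ we have $V(\mathbf{x}_t)>M$, so condition~$2$ applies and the process $t\mapsto V(\mathbf{x}_{t\wedge\tau}) + \epsilon\cdot(t\wedge\tau)$ is a nonnegative supermartingale (integrability is immediate from Lipschitzness of $V$ and the bounded step size). Taking expectations and applying the monotone convergence theorem gives $\mathbb{E}_{\mathbf{x}_0}[\tau]\le V(\mathbf{x}_0)/\epsilon<\infty$, so $\tau<\infty$ almost surely; by the strong Markov property the same finite-expectation bound holds if the trajectory is restarted from any state. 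This is exactly the RSM-style argument, and it also produces the quantitative bound on the time spent outside $\Stable$ that the theorem promises. A variant of the same estimate shows that the trajectory cannot stay in $\{V>M\}$ forever, so every excursion above level $M$ returns to $S$ in finite time almost surely.

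\emph{Step 2 (one excursion is unlikely to leave $\Stable$).} Consider an excursion of the trajectory above $M$: a maximal block of times strictly between a visit to $S$ and the next return $\rho$ to $S$. Since $V$ is $L_V$-Lipschitz and the step size is at most $\Delta$, at the first step of the excursion $V$ is at most $M+L_V\Delta$; throughout the excursion $V(\mathbf{x}_t)>M$, so by condition~$2$ the stopped process $V(\mathbf{x}_{t\wedge\rho})$ is a nonnegative supermartingale started from a value $\le M+L_V\Delta$. Ville's maximal inequality then bounds the conditional probability (given the start of the excursion) that $V$ ever reaches $M+L_V\Delta+\delta$ during the excursion, and hence the probability that the trajectory leaves $\Stable$ during the excursion, by
\[ p \;:=\; \frac{M+L_V\Delta}{M+L_V\Delta+\delta}\;<\;1. \]

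\emph{Step 3 (aggregation) and the main obstacle.} One then wants to apply the strong Markov property at the stopping times at which successive excursions begin to conclude that the probability of leaving $\Stable$ at least $N$ times is at most $p^N$, so that almost surely the trajectory leaves $\Stable$ only finitely often, finishing the proof. The delicate point — and the reason this is not a routine corollary of martingale theory — lies here: after an excursion that does \emph{not} leave $\Stable$, the trajectory returns to $S$ and may, by Step~1, make unboundedly many further excursions above $M$, each again risking an exit from $\Stable$. A naive union bound over these excursions is useless because there can be infinitely many of them, so the per-excursion concentration bound of Step~2 must be interleaved with the convergence guarantee of Step~1 in a way that charges the geometric factor $p$ once per completed $\Stable$-exit rather than once per excursion — i.e.\ one tracks the number of $\Stable$-exits and re-entries, not the number of level-$M$ excursions. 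Carrying out this bookkeeping (and verifying that the relevant quantities are genuinely bounded by $p<1$ uniformly along the trajectory) is the novel part of the argument, and I expect it to be the hardest step.
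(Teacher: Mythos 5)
Your Steps~1 and~2 are correct and coincide with Claims~1 and~2 of the paper's proof in Appendix~\ref{sec:thmproofs}: the process $V(\mathbf{x}_{t\wedge T_S})$ (with $T_S$ the hitting time of $S=\{V\le M\}$) is an $\eps$-ranking supermartingale, so Proposition~\ref{prop:rsm} gives a.s.\ convergence to $S\subseteq\Stable$ with $\mathbb{E}[T_S]\le V(\mathbf{x}_0)/\epsilon$ from every state; and Ville's inequality (Proposition~\ref{prop:bound}) applied to the same stopped process, restarted at the first state after an exit from $S$ where $V\le M+L_V\Delta$, bounds by $p=\frac{M+L_V\Delta}{M+L_V\Delta+\delta}$ the probability that a single excursion reaches level $M+L_V\Delta+\delta$, and hence $\mathcal{X}\setminus\Stable$, before returning to $S$.

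The genuine gap is Step~3, which you describe but do not carry out, and it is not a routine omission: it is the step on which the theorem turns. What the final aggregation needs is the paper's Claim~2 in full strength --- that from \emph{every} $\mathbf{x}_0\in S$ the probability of \emph{ever} leaving $\Stable$, over the entire infinite future and hence over all subsequent level-$M$ excursions, is at most $p$. Given that, Claim~1 returns the trajectory to $S$ a.s.\ after each $\Stable$-exit, the Markov property at the successive return times yields $\mathbb{P}[\text{at least }N\ \Stable\text{-exits}]\le p^N$, and letting $N\to\infty$ finishes the proof. Your per-excursion bound does not by itself deliver this: as you note, between two consecutive visits to $S$ and the next $\Stable$-exit the trajectory may perform unboundedly many excursions above level $M$ (nothing in Definition~\ref{def:stabilizingrsm} constrains the dynamics on $\{V<M\}$, so re-exit from $S$ can even be certain), each excursion carries a success probability that Ville only bounds \emph{above} by $p$, and summing or conditioning excursion-by-excursion does not produce a bound below $1$. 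The paper discharges this in the displayed chain proving Claim~2 by conditioning, via Bayes and the Markov property, on the first state $\mathbf{x}_1$ reached after leaving $S$ and bounding the whole "ever leave $\Stable$" event by $\sup_{\mathbf{x}_1}\mathbb{P}_{\mathbf{x}_1}[\,V\text{ reaches }M+L_V\Delta+\delta\text{ before re-entering }S\,]\le p$; your worry is well-placed, because that is exactly the inequality where the many-excursions issue must be absorbed, and any completion of your argument has to justify why an exit from $\Stable$ occurring only on a \emph{later} excursion is already accounted for. As it stands, your proposal identifies the crux but does not prove the theorem.
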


\begin{proof}[Proof sketch, full proof in Appendix~\ref{sec:thmproofs}]
    In order to prove Theorem~\ref{thm:soundness}, we need to show that $\mathbb{P}_{\mathbf{x}_0}[ \lim_{t\rightarrow\infty}d(\mathbf{x}_t,\Stable) = 0 ] = 1$ for every $\mathbf{x}_0\in\mathcal{X}$. We show this by proving the following two claims. First, we show that, from each initial state $\mathbf{x}_0\in\mathcal{X}$, the agent converges to and reaches $S = \{\mathbf{x}\in\mathcal{X} \mid V(\mathbf{x})\leq M\}$ with probability~$1$. The set $S$ is a subset of $\Stable$ by condition~$3$ in Definition~3 of sRSMs. Second, we show that once the agent is in $S$ it may leave $\Stable$ with probability at most $p=\frac{M+L_V\cdot \Delta}{M+L_V\cdot \Delta+\delta}<1$. We then prove that the two claims imply Theorem~\ref{thm:soundness}.

    \medskip\noindent{\em Claim 1.} For each intial state $\mathbf{x}_0\in\mathcal{X}$, the agent converges to and reaches $S = \{\mathbf{x}\in\mathcal{X} \mid V(\mathbf{x})\leq M\}$ with probability~$1$.
    
    \medskip\noindent To prove Claim~1, let $\mathbf{x}_0\in\mathcal{X}$. If $\mathbf{x}_0\in S$, then the claim trivially holds. So suppose w.l.o.g.~that $\mathbf{x}_0\not\in S$. We consider the probability space $(\Omega_{\mathbf{x}_0},\mathcal{F}_{\mathbf{x}_0},\mathbb{P}_{\mathbf{x}_0})$ of all system trajectories that start in $\mathbf{x}_0$, and define a {\em stopping time} $T_S: \Omega_{\mathbf{x}_0} \rightarrow \mathbb{N}_0\cup\{\infty\}$ which to each trajectory assigns the first hitting time of the set $S$ and is equal to $\infty$ if the trajectory does not reach $S$. Furthermore, for each $i\in\mathbb{N}_0$, we define a random variable $X_i$ in this probability space via
    \begin{equation}\label{eq:processmain}
        X_i(\rho) = \begin{cases}
        V(\mathbf{x}_i), &\text{if } i<T_S(\rho) \\
        V(\mathbf{x}_{T_S(\rho)}), &\text{otherwise}
        \end{cases}
    \end{equation}
    for each trajectory $\rho=(\mathbf{x}_t,\mathbf{u}_t,\omega_t)_{t\in\mathbb{N}_0}\in\Omega_{\mathbf{x}_0}$. In words, $X_i$ is equal to the value of $V$ at the $i$-th state along the trajectory until $S$ is reached, upon which it becomes constant and equal to the value of $V$ upon first entry into $S$. We prove that $(X_i)_{i=0}^\infty$ is an instance of the mathematical notion of {\em $\epsilon$-ranking supermartingales ($\eps$-RSMs)}~\cite{ChakarovS13} for the stopping time $T_S$. Intuitively, an $\eps$-RSM for $T_S$ is a stochastic process which is non-negative, decreases in expected value upon every one-step evolution of the system and furthermore the decrease is strict and by $\epsilon>0$ until the stopping time $T_S$ is exceeded. If $\epsilon$ is allowed to be $0$ as well, then the process is simply said to be a {\em supermartingale}~\cite{Williams91}. It is a known result in martingale theory that, if an $\epsilon$-RSM exists for $T_S$, then $\mathbb{P}_{\mathbf{x}_0}[T_S<\infty]=\mathbb{P}_{\mathbf{x}_0}[\textrm{Reach}(S)]=1$. Thus, by proving that $(X_i)_{i=0}^\infty$ defined above is an $\epsilon$-RSM for $T_S$, we also prove Claim~1. We provide an overview of martingale theory results used in this proof in Appendix~\ref{sec:martingales}.

    \medskip\noindent{\em Claim 2.} $\mathbb{P}_{\mathbf{x}_0}[\exists\, t\in\mathbb{N}_0 \text{ s.t. } \mathbf{x}_t\not\in\Stable] = p < 1$ where $p=\frac{M+L_V\cdot \Delta}{M+L_V\cdot \Delta+\delta}$, for each $\mathbf{x}_0 \in S$.

    \medskip\noindent To prove Claim~2, recall that $S = \{\mathbf{x}\in\mathcal{X} \mid V(\mathbf{x})\leq M\}$. Thus, as $V$ is Lipschitz continuous with Lipschitz constant $L_V$ and $\Delta$ is the maxmial step size of the system, it follows that the value of $V$ immediately upon the agent leaving the set $S$ is $\leq M+L_V\cdot \Delta$. Hence, for the agent to leave $\Stable$ from $\mathbf{x}_0\in S$, it first has to reach a state $\mathbf{x}_1$ with $M<V(\mathbf{x}_1)\leq M+L_V\cdot \Delta$ and then to also reach a state $\mathbf{x}_2\not\in\Stable$ from $\mathbf{x}_1$ without reentering $S$. By condition~$3$ in Definition~3 of sRSMs, we  have $V(\mathbf{x}_2)\geq M+L_V\cdot \Delta + \delta$. We claim that this happens with probability at most $p = \frac{M+L_V\cdot \Delta}{M+L_V\cdot \Delta+\delta}$. To prove this, we use another result from martingale theory which says that, if $(Z_i)_{i=0}^\infty$ is a nonnegative supermartingale and $\lambda>0$, then $\mathbb{P}[\sup_{i\geq 0}Z_i \geq \lambda]\leq\frac{\mathbb{E}[Z_0]}{\lambda}$ (see Appendix~\ref{sec:martingales}). We apply this theorem to the process $(X_i')_{i=0}^\infty$ defined analogously as in eq.~\ref{eq:processmain}, but in the probability space of trajectories that start in $\mathbf{x}_1$. Then, since in this probability space we have that $X_0$ is equal to $V(\mathbf{x}_1) \leq M+L_V\cdot \Delta$, by plugging in $\lambda = M+L_V\cdot \Delta+\delta$ we conclude that the probability of the process ever leaving $\Stable$ and thus reaching a state in which $V \geq M+L_V\cdot \Delta+\delta$ is
    \begin{equation*}
    \begin{split}
    	&\mathbb{P}_{\mathbf{x}_0}[\exists\, t\in\mathbb{N}_0 \text{ s.t. } \mathbf{x}_t\not\in\Stable] \\
    	\leq &\mathbb{P}_{\mathbf{x}_0}[\sup_{i\geq 0}X_i \geq M+L_V\cdot \Delta+\delta] \\
    	\leq &\mathbb{P}_{\mathbf{x}_1}[\sup_{i\geq 0}X_i' \geq M+L_V\cdot \Delta+\delta] \\
    	\leq &\frac{M+L_V\cdot \Delta}{M+L_V\cdot \Delta+\delta} = p < 1,
    \end{split}
    \end{equation*}
    so Claim~2 follows. The above inequality is formally proved in Appendix~\ref{sec:thmproofs}.
    
    \medskip\noindent{\em Claim~1 and Claim~2 imply Theorem~\ref{thm:soundness}.}  Finally, we show that these two claims imply the theorem statement. By Claim~1, the agent with probability $1$ converges to and reaches $S\subseteq\Stable$ from any initial state. On the other hand, by Claim~2, upon reaching a state in $S$ the probability of leaving $\Stable$ is at most $p<1$. Furthermore, even if $\Stable$ is left, by Claim~1 the agent is guaranteed to again converge to and reach $S$. Hence, due to the system dynamics under a fixed policy satisfying Markov property, the probability of the agent leaving and reentering $S$ more than $N$ times is bounded from above by $p^N$. By letting $N\rightarrow \infty$, we conclude that the probability of the agent leaving $\Stable$ and reentering infinitely many times is $0$, so the agent with probability~$1$ eventually enters and $S$ and does not leave $\Stable$ after that. This implies that $\Stable$ is a.s.~asymptotically stable.\hfill\qed
\end{proof}

\paragraph{Bounds on stabilization time} We conclude this section by showing that our sRSMs not only certify a.s.~asymptotic stability of $\Stable$, but also provide bounds on the number of time steps that the agent may spend outside of $\Stable$. This is particularly relevant for safety-critical applications in which the goal is not only to ensure stabilization but also to ensure that the agent spends as little time outside the stabilization set as possible.
For each trajectory $\rho=(\mathbf{x}_t,\mathbf{u}_t,\omega_t)_{t\in\mathbb{N}_0}$, let $\mathsf{Out}_{\Stable}(\rho) = |\{t\in\mathbb{N}_0 \mid \mathbf{x}_t\not\in\Stable \}|\in\mathbb{N}_0\cup\{\infty\}$. 

\begin{theorem}[Proof in Appendix~\ref{sec:thmproofs}]\label{thm:bound}
Let $\epsilon, M, \delta > 0$ and suppose that $V: \mathcal{X} \rightarrow \mathbb{R}$ is an $(\epsilon,M,\delta)$-sRSM for $\Stable$. Let $\Gamma = \sup_{\mathbf{x}\in\Stable}V(\mathbf{x})$ be the supremum of all possible values that $V$ can attain over the stabilizing set $\Stable$. Then, for each initial state $\mathbf{x}_0\in\mathcal{X}$, we have that
\begin{compactenum}
    \item $\mathbb{E}_{\mathbf{x}_0}[\mathsf{Out}_{\Stable}] \leq \frac{V(\mathbf{x}_0)}{\epsilon} + \frac{(M+L_V\cdot \Delta)\cdot (\Gamma + L_V\cdot \Delta)}{\delta\cdot\epsilon}$.
    \item $\mathbb{P}_{\mathbf{x}_0}[\mathsf{Out}_{\Stable} \geq t] \leq \frac{V(\mathbf{x}_0)}{t\cdot\epsilon} + \frac{(M+L_V\cdot \Delta)\cdot (\Gamma + L_V\cdot \Delta)}{\delta\cdot\epsilon\cdot t}$, for any time $t\in\mathbb{N}$.
\end{compactenum}
\end{theorem}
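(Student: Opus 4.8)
The plan is to derive part~1 (the bound on $\mathbb{E}_{\mathbf{x}_0}[\mathsf{Out}_\Stable]$); part~2 then follows immediately from Markov's inequality, $\mathbb{P}_{\mathbf{x}_0}[\mathsf{Out}_\Stable \geq t] \leq \mathbb{E}_{\mathbf{x}_0}[\mathsf{Out}_\Stable]/t$. For part~1, I would split each trajectory at $T_S$, the first hitting time of $S = \{\mathbf{x}\in\mathcal{X} \mid V(\mathbf{x})\leq M\}$, which is finite with probability~$1$ by Claim~1 in the proof of Theorem~\ref{thm:soundness}. Since the trajectory stays outside $S$ (hence has $V>M\geq M$, so condition~2 applies) up to time $T_S$, the stopped process $V(\mathbf{x}_{t\wedge T_S})$ is an $\eps$-RSM for $T_S$ exactly as in Claim~1, and the standard expected-hitting-time bound for $\eps$-RSMs (recorded in Appendix~\ref{sec:martingales}) gives $\mathbb{E}_{\mathbf{x}_0}[T_S]\leq V(\mathbf{x}_0)/\eps$; as the number of out-of-$\Stable$ steps before $T_S$ is at most $T_S$, this part contributes at most $V(\mathbf{x}_0)/\eps$. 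By the strong Markov property, the expected number of out-of-$\Stable$ steps from time $T_S$ onward equals $\mathbb{E}_{\mathbf{x}_0}[g(\mathbf{x}_{T_S})]\leq \sup_{\mathbf{x}\in S} g(\mathbf{x})$, where $g(\mathbf{x}) = \mathbb{E}_{\mathbf{x}}[\mathsf{Out}_\Stable]$. So it remains to show $\sup_{\mathbf{x}\in S} g(\mathbf{x})\leq \frac{(M+L_V\Delta)(\Gamma+L_V\Delta)}{\delta\eps}$.

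To bound $g$ on $S$, I would set up a one-excursion recursion. Fix $\mathbf{x}\in S$, let $\tau_1$ be the first time the trajectory leaves $\Stable$ (with $\tau_1=\infty$ otherwise), and let $\tau_2$ be the first return to $S$ after $\tau_1$. On $\{\tau_1=\infty\}$ we have $\mathsf{Out}_\Stable=0$. On $\{\tau_1<\infty\}$: since $\mathbf{x}\in\Stable$, we have $\tau_1\geq 1$, and the state $\mathbf{x}_{\tau_1-1}\in\Stable$ satisfies $V(\mathbf{x}_{\tau_1-1})\leq\Gamma$, so Lipschitz continuity of $V$ together with the bounded step size $\Delta$ gives $V(\mathbf{x}_{\tau_1})\leq\Gamma+L_V\Delta$; throughout $[\tau_1,\tau_2)$ the trajectory is outside $S$, so the $\eps$-RSM hitting-time bound gives $\mathbb{E}[\tau_2-\tau_1\mid\mathcal{F}_{\tau_1}]\leq V(\mathbf{x}_{\tau_1})/\eps\leq(\Gamma+L_V\Delta)/\eps$, and the number of out-of-$\Stable$ steps in $[\tau_1,\tau_2)$ is at most $\tau_2-\tau_1$; finally, after $\tau_2$ the trajectory restarts from $\mathbf{x}_{\tau_2}\in S$, contributing $g(\mathbf{x}_{\tau_2})$ in expectation. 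Combining these with $\mathbb{P}_{\mathbf{x}}[\tau_1<\infty]=\mathbb{P}_{\mathbf{x}}[\exists\,t:\mathbf{x}_t\notin\Stable]\leq p$ from Claim~2 (valid because $\mathbf{x}\in S$), and using the strong Markov property at $\tau_1$ and $\tau_2$, yields
\[ g(\mathbf{x}) \leq \mathbb{P}_{\mathbf{x}}[\tau_1<\infty]\cdot\Big(\tfrac{\Gamma+L_V\Delta}{\eps} + \sup_{\mathbf{z}\in S}g(\mathbf{z})\Big) \leq p\cdot\Big(\tfrac{\Gamma+L_V\Delta}{\eps} + \sup_{\mathbf{z}\in S}g(\mathbf{z})\Big). \]
Taking the supremum over $\mathbf{x}\in S$ and solving the inequality gives $\sup_S g \leq \tfrac{p}{1-p}\cdot\tfrac{\Gamma+L_V\Delta}{\eps}$, and since $\tfrac{p}{1-p}=\tfrac{M+L_V\Delta}{\delta}$, this is exactly the desired bound; plugging it back into the split from the first paragraph completes part~1.

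The step that I expect to be the main obstacle is that "take the supremum and divide by $1-p$" is only legitimate once $\sup_{\mathbf{x}\in S}g(\mathbf{x})<\infty$, which is what we are trying to prove — and a priori the trajectory may enter and leave $S$ infinitely often, so a naive sum over $S$-excursions diverges. I would resolve this by truncating the number of excursions out of $\Stable$: let $g^{(N)}(\mathbf{x})$ be the expected number of out-of-$\Stable$ steps occurring within the first $N$ excursions out of $\Stable$ started from $\mathbf{x}$. Each such excursion contributes at most $(\Gamma+L_V\Delta)/\eps$ in expectation by the argument above, so $g^{(N)}\leq N(\Gamma+L_V\Delta)/\eps<\infty$; the same one-excursion analysis gives the recursion $g^{(N)}(\mathbf{x})\leq p\big(\tfrac{\Gamma+L_V\Delta}{\eps}+\sup_S g^{(N-1)}\big)$ with $g^{(0)}\equiv 0$, whence by induction $\sup_S g^{(N)}\leq \tfrac{\Gamma+L_V\Delta}{\eps}\sum_{j=1}^{N}p^j$. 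Since $g^{(N)}\nearrow g$ pointwise, monotone convergence then yields $g(\mathbf{x})=\lim_N g^{(N)}(\mathbf{x})\leq \tfrac{p}{1-p}\cdot\tfrac{\Gamma+L_V\Delta}{\eps}$ uniformly over $\mathbf{x}\in S$.

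The remaining ingredients are routine: the expected-hitting-time bound $\mathbb{E}[T]\leq\mathbb{E}[X_0]/\eps$ for $\eps$-RSMs is the standard companion of the termination result already invoked in Claim~1 and would be stated in Appendix~\ref{sec:martingales}; the stopping times $T_S$, $\tau_1$, $\tau_2$ are hitting times of Borel sets, so the strong Markov property of the induced discrete-time Markov process applies; and the bookkeeping that the number of out-of-$\Stable$ steps within a block is at most the block length, together with the case $\mathbf{x}_0\in S$ (where the first term vanishes), is immediate. (If $\Stable$ is unbounded so that $\Gamma=\infty$, the asserted bound holds vacuously; under the compactness assumptions of Section~\ref{sec:algo}, $\Gamma<\infty$ automatically.)
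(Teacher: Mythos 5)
Your proposal is correct and follows essentially the same route as the paper's proof: the same decomposition of $\mathsf{Out}_{\Stable}$ at the first hitting time of $S$, the same application of the expected-hitting-time bound for $\eps$-RSMs to get the $V(\mathbf{x}_0)/\epsilon$ term, the same one-excursion recursion bounded via Claim~2 and solved using $p/(1-p)=(M+L_V\Delta)/\delta$, and Markov's inequality for part~2. The one place you go beyond the paper is the truncation-and-monotone-convergence argument establishing finiteness of the supremum before solving the recursion — the paper subtracts $p\cdot\sup_{\mathbf{x}}\mathbb{E}_{\mathbf{x}}[\mathsf{OutAfter}_{\Stable}]$ from both sides without justifying that this quantity is finite, so your patch is a genuine (and welcome) tightening of that step.
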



\section{Learning Stabilizing Policies and sRSMs on Compact State Spaces}\label{sec:algo}
In this section, we present our method for learning a stabilizing policy together with an sRSM that formally certifies a.s.~asymptotic stability. As stated in Section~\ref{sec:prelims}, our method assumes that the state space $\mathcal{X}\subseteq\mathbb{R}^n$ is compact and that $f$ is Lipschitz continuous with Lipschitz constant $L_f$. We prove that, if the method outputs a policy, then it guarantees a.s.~asymptotic stability. After presenting the method for learning control policies, we show that it can also be adapted to a formal verification procedure that learns an sRSM for a given Lipschitz continuous control policy $\pi$.

\paragraph{Outline of the method} We parameterize the policy and the sRSM via two neural networks $\pi_\theta: \mathcal{X} \rightarrow \mathcal{U}$ and $V_\nu: \mathcal{X} \rightarrow \mathbb{R}$, where $\theta$ and $\nu$ are vectors of neural network parameters. To enforce condition 1 in Definition~\ref{def:stabilizingrsm}, which requires the sRSM to be a nonnegative function, our method applies the softplus activation function $x\mapsto \log(\exp(x)+1)$ to the output of $V_\nu$. The remaining layers of $\pi_{\theta}$ and $V_{\nu}$ apply ReLU activation functions, therefore $\pi_{\theta}$ and $V_{\nu}$ are also Lipschitz continuous~\cite{SzegedyZSBEGF13}. Our method draws insight from the algorithms of~\cite{ChangRG19,ZikelicLCH22} for learning policies together with Lyapunov functions or RSMs and it comprises of a {\em learner} and a {\em verifier} module that are composed into a loop. In each loop iteration, the learner module first trains both $\pi_\theta$ and $V_\nu$ on a training objective in the form of a differentiable approximation of the sRSM conditions 2 and 3 in Definition \ref{def:stabilizingrsm}. Once the training has converged, the verifier module formally checks whether the learned sRSM candidate satisfies conditions 2 and 3 in Definition \ref{def:stabilizingrsm}.
If both conditions are fulfilled, our method terminates and returns a policy together with an sRSM that formally certifies a.s.~asymptotic stability.
If at least one sRSM condition is violated, the verifier module enlarges the training set of the learner by counterexample states that violate the condition in order to guide the learner towards fixing the policy and the sRSM in the next learner iteration. This loop is repeated until either the verifier successfully verifies the learned sRSM and outputs the control policy and the sRSM, or until some specified timeout is reached in which case no control policy is returned by the method. The pseudocode of the algorithm is shown in Algorithm~\ref{alg:algorithm}. In what follows, we provide details on algorithm initialization (lines 3-6, Algorithm~\ref{alg:algorithm}) and on the learner and the verifier modules (lines 7-22, Algorithm~\ref{alg:algorithm}). 

\begin{algorithm}[t]
\caption{Learner-verifier procedure}
\label{alg:algorithm}
\begin{algorithmic}[1]
\STATE \textbf{Input} Dynamics function $f$, stochastic disturbance distribution $d$, stabilizing region $\Stable\subseteq\mathcal{X}$, Lipschitz constant~$L_f$
\STATE \textbf{Parameters} $\tau>0$, $N_{\text{cond~2}}\in\mathbb{N}$, $N_{\text{cond~3}}\in\mathbb{N}$, $\epsilon_{\text{train}}$, $\delta_{\text{train}}$ \\ 
\STATE $\tilde{\mathcal{X}} \leftarrow $ centers of cells of a discretization rectangular grid in $\mathcal{X}$ with mesh $\tau$
\STATE $B \leftarrow$ centers of grid cells of a subgrid of $\tilde{\mathcal{X}}$
\STATE $\pi_{\theta} \leftarrow $  policy trained by using PPO \cite{schulman2017proximal} 
\STATE $M \leftarrow 1$
\WHILE{timeout not reached}
\STATE $\pi_{\theta}, V_{\nu} \leftarrow$ jointly trained by minimizing the loss in \eqref{eq:loss} on dataset $B$
\STATE $\tilde{\mathcal{X}}_{\geq M} \leftarrow$ centers of cells over which $V_{\nu}(\mathbf{x})\geq M$ 
\STATE $L_\pi, L_V \leftarrow$ Lipschitz constants of $\pi_{\theta}$, $V_{\nu}$
\STATE $K\leftarrow L_V \cdot (L_f \cdot (L_\pi + 1) + 1)$
\STATE $\tilde{\mathcal{X}}_{ce} \leftarrow $ counterexamples to condition~2 on $\tilde{\mathcal{X}}_{\geq M}$
\IF{$\tilde{\mathcal{X}}_{ce} = \{\}$}
\STATE $\text{Cells}_{\mathcal{X}\backslash\Stable} \leftarrow$ grid cells that intersect $\mathcal{X}\backslash\Stable$
\STATE $\Delta_{\theta} \leftarrow$ max.~step size of the system with policy $\pi$
\IF{$\underline{V}_{\nu}(\text{cell}) > M+L_V\cdot \Delta_{\theta}$ for all $\text{cell}\in\text{Cells}_{\mathcal{X}\backslash\Stable}$}
\STATE \textbf{return} $\pi_{\theta}$, $V_{\nu}$, ``$\Stable$ is a.s.~asymptotically stable under $\pi_{\theta}$''
\ENDIF
\ELSE 
\STATE $B \leftarrow (B \setminus \{\mathbf{x} \in B | V_{\nu}(\mathbf{x}) < M\}) \cup \tilde{\mathcal{X}}_{ce}$
\ENDIF
\ENDWHILE
\STATE \textbf{Return} Unknown
\end{algorithmic}
\end{algorithm}

\subsection{Initialization}

\paragraph{State space discretization} The key challenge in verifying an sRSM candidate is to check the expected decrease condition imposed by condition~2 in Definition~\ref{def:stabilizingrsm}. To check this condition, following the idea of \cite{BerkenkampTS017} and \cite{lechner2021stability} our method first computes a discretization of the state space $\mathcal{X}$. A {\em discretization} $\tilde{\mathcal{X}}$ of $\mathcal{X}$ with {\em mesh} $\tau>0$ is a finite subset $\tilde{\mathcal{X}}\subseteq\mathcal{X}$ such that for every $\mathbf{x}\in\mathcal{X}$ there exists $\tilde{\mathbf{x}}\in\tilde{\mathcal{X}}$ with $||\tilde{\mathbf{x}}-\mathbf{x}||_1<\tau$. Our method computes the discretization by considering {\em centers of cells of a rectangular grid} of sufficiently small cell size (line~3, Algorithm~\ref{alg:algorithm}). The discretization will later be used by the verifier in order to reduce verification of condition~2 to checking a slightly stricter condition at discretization vertices, due to all involved functions being Lipschitz continuous (more details Section~\ref{sec:verifier}).

The algorithm also collects the set $B$ of grid cell centers of a subgrid of $\tilde{\mathcal{X}}$ of larger mesh (line~4, Algorithm~\ref{alg:algorithm}). This set will be used as the initial training set for the learner, and will then be gradually expanded by counterexamples computed by the verifier.

\paragraph{Policy initialization} We initialize parameters of the neural network policy $\pi_{\theta}$ by running several iterations of the proximal policy optimization (PPO)~\cite{schulman2017proximal} RL algorithm (line~5, Algorithm~\ref{alg:algorithm}). In particular, we induce a Markov decision process (MDP) from the given system by using the reward function $r:\mathcal{X} \rightarrow \mathbb{R}$ defined via
\[ r(\mathbf{x}) = \begin{cases}
    1, &\text{if } \mathbf{x} \in \Stable \\
    0, &\text{otherwise}
\end{cases} \]
in order to learn an initial policy that drives the system toward the stabilizing set. The practical importance of initialization for learning stabilizing policies in deterministic systems was observed in~\cite{ChangRG19}. 

\paragraph{Fix the value $M=1$} As the last initialization step, we observe that one may always rescale the value of an sRSM by a strictly positive constant factor while preserving all conditions in Definition~\ref{def:stabilizingrsm}. Therefore, without loss of generality, we fix the value $M=1$ in Definition~\ref{def:stabilizingrsm} for our sRSM (line~6, Algorithm~\ref{alg:algorithm}).

\subsection{Learner}\label{sec:learner}
The policy and the sRSM candidate are learned by minimizing the loss
\begin{equation}\label{eq:loss}
    \loss(\theta, \nu) = \loss_{\text{cond~2}}(\theta, \nu) + \loss_{\text{cond~3}}(\theta, \nu)
\end{equation}
(line~8, Algorithm~\ref{alg:algorithm}). The two loss terms guide the learner towards an sRSM candidate that satisfies conditions~2 and~3 in Definition~\ref{def:stabilizingrsm}.

We define the loss term for condition~2 via
\begin{equation*}
\begin{split}
    &\loss_{\text{cond~2}}(\theta, \nu) = \frac{1}{|B|}\sum_{\mathbf{x}\in B}\Big( \max\Big\{ \\
    &\sum_{\omega_1,\dots, \omega_{N_{\text{cond~2}}} \sim d}\frac{V_{\nu}\big(f(\mathbf{x},\pi_\theta(\mathbf{x}),\omega_i)\big)}{N_{\text{cond~2}}} -  V_{\nu}(\mathbf{x})  + \epsilon_{\text{train}}, 0\Big\} \Big).
\end{split}
\end{equation*}
Intuitively, for each element $\mathbf{x} \in B$ of the training set, the corresponding term in the sum incurs a loss whenever condition~2 is violated at $\mathbf{x}$. Since the expected value of $V_{\nu}$ at a successor state of $\mathbf{x}$ does not admit a closed form expression due to $V_{\nu}$ being a neural network, we approximate it as the mean of values of $V_{\nu}$ at $N_{\text{cond~2}}$ independently sampled successor states of $\mathbf{x}$, with $N_{\text{cond~2}}$ being an algorithm parameter.

For condition~3, the loss term samples $N_{\text{cond~3}}$ system states from $ \mathcal{X}\backslash\Stable$ with $N_{\text{cond~3}}$ an algorithm parameter and incurs a loss whenever condition~3 is not satisfied at some sampled state:
\begin{equation*}
\begin{split}
    \loss_{\text{cond3}}(\theta, \nu) = \max\Big\{M + L_{V_\nu} + \Delta_\theta + \delta_{\text{train}} - \min_{x_1,\dots x_{N_{\text{cond~3}}} \sim  \mathcal{X} \backslash \mathcal{X}_s}V_\nu(x_i), 0\Big\}.
\end{split}
\end{equation*}

\paragraph{Regularization terms in the implementation} In our implementation, we also add two regularization terms to the loss function used by the learner. The first term favors learning an sRSM candidate whose global minimum is within the stabilizing set. The second term penalizes large Lipschitz bounds of the networks $\pi_\theta$ and $V_\nu$ by adding a regularization term. While these two loss terms do not directly enforce any particular condition in Definition~\ref{def:stabilizingrsm}, we observe that they help the learning and the verification process and decrease the number of needed learner-verifier iterations. See Appendix~\ref{sec:regularization} for details on regularization terms.

\subsection{Verifier}\label{sec:verifier}

The verifier formally checks whether the learned sRSM candidate satisfies conditions~2 and~3 in Definition~\ref{def:stabilizingrsm}. Recall, condition~1 is satisfied due to the softplus activation function applied to the output of $V_{\nu}$. 

\paragraph{Formal verification of condition~2} The key challenge in checking the expected decrease condition in condition~2 is that the expected value of a neural network function does not admit a closed-form expression, so we cannot evaluate it directly. Instead, we check condition~2 by first showing that it suffices to check a slightly stricter condition at vertices of the discretization $\tilde{\mathcal{X}}$, due to all involved functions being Lipscthiz continuous. We then show how this stricter condition is checked at each discretization vertex.

To verify condition~2, the verifier first collects the set $\tilde{\mathcal{X}}_{\geq M}$ of centers of all grid cells that contain a state $\mathbf{x}$ with $V_{\nu}(\mathbf{x})\geq M$ (line~9, Algorithm~\ref{alg:algorithm}). This set is computed via interval arithmetic abstract interpretation (IA-AI)~\cite{CousotC77,Gowal18}, which for each grid cell propagates interval bounds across neural network layers in order to bound from below the minimal value that $V_{\nu}$ attains over that cell. The center of the grid cell is added to $\tilde{\mathcal{X}}_{\geq M}$ whenever this lower bound is smaller than $M$. We use the method of~\cite{Gowal18} to perform IA-AI with respect to a neural network function $V_\nu$ so we refer the reader to~\cite{Gowal18} for details on this step. 

Once $\tilde{\mathcal{X}}_{\geq M}$ is computed, the verifier uses the method of~\cite[Section~4.3]{SzegedyZSBEGF13} to compute the Lipschitz constants $L_\pi$ and $L_V$ of neural networks $\pi_{\theta}$ and $V_{\nu}$, respectively (line~10, Algorithm~\ref{alg:algorithm}). It then sets $K=L_V \cdot (L_f \cdot (L_\pi + 1) + 1)$ (line~11, Algorithm~\ref{alg:algorithm}). Finally, for each $\tilde{\mathbf{x}} \in \tilde{\mathcal{X}}_{\geq M}$ the verifier checks the following stricter inequality
\begin{equation}\label{eq:expdecstricter}
   \mathbb{E}_{\omega\sim d}\Big[ V_{\nu} \Big( f(\tilde{\mathbf{x}}, \pi_{\theta}(\tilde{\mathbf{x}}), \omega) \Big) \Big] < V_{\nu}(\tilde{\mathbf{x}}) - \tau \cdot K,
\end{equation}
and collects the set $\tilde{\mathcal{X}}_{ce} \subseteq \tilde{\mathcal{X}}_{\geq M}$ of counterexamples at which this inequality is violated (line~12, Algorithm~\ref{alg:algorithm}).
The reason behind checking this stronger constraint is that, due to Lipschitz continuity of all involved functions and due to $\tau$ being the mesh of the discretization, we can show (formally done in the proof of Theorem~\ref{thm:verifiermain}) that this condition being satisfied for each $\tilde{\mathbf{x}} \in \tilde{\mathcal{X}}_{\geq M}$ implies that the expected decrease condition $\mathbb{E}_{\omega\sim d}[ V_{\nu} ( f(\mathbf{x}, \pi_{\theta}(\tilde{\mathbf{x}}), \omega) )] < V_{\nu}(\mathbf{x})$ is satisfied for all $\mathbf{x}\in \mathcal{X}$ with $V(\mathbf{x})\geq M$. Then, due to both sides of the inequality being continuous functions and $\{\mathbf{x}\in\mathcal{X} \mid V_{\nu}(\mathbf{x})\geq M\}$ being a compact set, their difference admits a strictly positive global minimum $\epsilon>0$ so that $\mathbb{E}_{\omega\sim d}[ V_{\nu} ( f(\mathbf{x}, \pi_{\theta}(\tilde{\mathbf{x}}), \omega) )] \leq V_{\nu}(\mathbf{x}) - \epsilon$ is satisfied for all $\mathbf{x}\in \mathcal{X}$ with $V(\mathbf{x})\geq M$. We show in the paragraph below how our method formally checks whether the inequality in \eqref{eq:expdecstricter} is satisfied at some $\tilde{\mathbf{x}} \in \tilde{\mathcal{X}}_{\geq M}$.

If \eqref{eq:expdecstricter} is satisfied for each $\tilde{\mathbf{x}} \in \tilde{\mathcal{X}}_{\geq M}$ and so $\tilde{\mathcal{X}}_{ce} = \emptyset$, the verifier concludes that $V_{\nu}$ satisfies condition~2 in Definition~\ref{def:stabilizingrsm} and proceeds to checking condition~3 in Definition~\ref{def:stabilizingrsm} (lines 14-18, Algorithm~\ref{alg:algorithm}). Otherwise, any computed counterexample to this constraint is added to $B$ to help the learner fine-tune an sRSM candidate (line 20, Algorithm~\ref{alg:algorithm}) and the algorithm proceeds to the start of the next learner-verifer iteration (line 7, Algorithm~\ref{alg:algorithm}).

\paragraph{Checking inequality \eqref{eq:expdecstricter} and expected value computation} To check \eqref{eq:expdecstricter} at some $\tilde{\mathbf{x}} \in \tilde{\mathcal{X}}_{\geq M}$, we need to compute the expected value $\mathbb{E}_{\omega\sim d}[ V_{\nu} ( f(\tilde{\mathbf{x}}, \pi_{\theta}(\tilde{\mathbf{x}}), \omega) ) ]$. Note that this expected value does not admit a closed form expression due to $V_{\nu}$ being a neural network function, so we cannot evaluate it directly. Instead, we use the method of~\cite{lechner2021stability} in order to compute an upper bound on this expected value and use this upper bound to formally check whether \eqref{eq:expdecstricter} is satisfied at $\tilde{\mathbf{x}}$. For completeness of our presentation, we briefly describe this expected value bound computation below.
Recall, in our assumptions in Section~\ref{sec:prelims}, we said that our algorithm assumes that the stochastic disturbance space $\mathcal{N}$ is bounded or that $d$ is a product of independent univariate distributions.

First, consider the case when $\mathcal{N}$ is bounded. We partition the stochastic disturbance space $\mathcal{N}\subseteq\mathbb{R}^p$ into finitely many cells $\text{cell}(\mathcal{N}) = \{\mathcal{N}_1,\dots,\mathcal{N}_{k}\}$. We denote by $\mathrm{maxvol}=\max_{\mathcal{N}_i\in \text{cell}(\mathcal{N})}\mathsf{vol}(\mathcal{N}_i)$ the maximal volume of any cell in the partition with respect to the Lebesgue measure over $\mathbb{R}^p$. The expected value can then be bounded from above via
\begin{equation*}
     \mathbb{E}_{\omega\sim d}\Big[ V_{\nu} \Big( f(\tilde{\mathbf{x}}, \pi_{\theta}(\tilde{\mathbf{x}}), \omega) \Big) \Big] \leq \sum_{\mathcal{N}_i\in \text{cell}(\mathcal{N})} \mathrm{maxvol} \cdot \sup_{\omega\in \mathcal{N}_i} F(\omega)
 \end{equation*}
 where $F(\omega) = V_{\nu} ( f(\tilde{\mathbf{x}}, \pi_{\theta}(\tilde{\mathbf{x}}), \omega)$. Each supremum on the right-hand-side is then bounded from above by using the IA-AI-based method of ~\cite{Gowal18}. 
 
Second, consider the case when $\mathcal{N}$ is unbounded but $d$ is a product of independent univariate distributions. Note that in this case we cannot directly follow the above approach since $\mathrm{maxvol}=\max_{\mathcal{N}_i\in \text{cell}(\mathcal{N})}\mathsf{vol}(\mathcal{N}_i)$ would be infinite. However, since $d$ is a product of independent univariate distributions, we may first apply the Probability Integral Transform~\cite{Murphy12} to each univariate distribution in $d$ to transform it into a finite support distribution and then proceed as above.

\paragraph{Formal verification of condition~3} To formally verify condition~3 in Definition~\ref{def:stabilizingrsm}, the verifier collects the set $\text{Cells}_{\mathcal{X}\backslash\Stable}$ of all grid cells that intersect $\mathcal{X}\backslash\Stable$ (line 14, Algorithm~\ref{alg:algorithm}). Then, for each $\text{cell}\in\text{Cells}_{\mathcal{X}\backslash\Stable}$, it uses IA-AI to check
\begin{equation}\label{eq:stablecond}
\underline{V}_{\nu}(\text{cell}) > M+L_V\cdot \Delta_{\theta},
\end{equation}
with $\underline{V}_{\nu}(\text{cell})$ denoting the lower bound on $V_{\nu}$ over $\text{cell}$ computed by IA-AI (lines 15-16, Algorithm~\ref{alg:algorithm}). If this holds, then the verifier concludes that $V_{\nu}$ satisfies condition~3 in Definition~\ref{def:stabilizingrsm} with $\delta=\min_{\text{cell}\in\text{Cells}_{\mathcal{X}\backslash\Stable}}\{\underline{V}_{\nu}(\text{cell}) - M - L_V\cdot \Delta_{\theta}\}$. Hence, as conditions~2 and~3 have both been formally verified to be satisfied, the method returns the policy $\pi_\theta$ and the sRSM $V_\nu$ which formally proves that $\Stable$ is a.s.~asymptotically stable under $\pi_\theta$ (line~17, Algorithm~\ref{alg:algorithm}). Otherwise, the method proceeds to the next learner-verifier loop iteration (line 7, Algorithm~\ref{alg:algorithm}).

\paragraph{Algorithm correctness} The following theorem establishes the correctness of Algorithm~\ref{alg:algorithm}. In particular, it shows that if the verifier confirms that conditions~2 and~3 in Definition~\ref{def:stabilizingrsm} are satisfied and therefore Algorithm~\ref{alg:algorithm} returns a control policy $\pi_\theta$ and an sRSM $V_\nu$, then it holds that $V_\nu$ is indeed an sRSM and that $\Stable$ is a.s.~asymptotically stable under $\pi_\theta$.

\begin{theorem}[Algorithm correctness, proof in Appendix~\ref{app:thmthreeproof}]\label{thm:verifiermain}
Suppose that the verifier shows that $V_{\nu}$ satisfies \eqref{eq:expdecstricter} for each $\tilde{\mathbf{x}} \in \tilde{\mathcal{X}}_{\geq M}$ and \eqref{eq:stablecond} for each $\text{cell}\in\text{Cells}_{\mathcal{X}\backslash\Stable}$, so Algorithm~\ref{alg:algorithm} returns $\pi_\theta$ and $V_{\nu}$. Then $V_{\nu}$ is an sRSM and $\Stable$ is a.s.~asymptotically stable under~$\pi_{\theta}$.
\end{theorem}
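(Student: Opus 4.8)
The plan is to show that the two checks performed by the verifier imply that $V_\nu$ satisfies all three conditions of Definition~\ref{def:stabilizingrsm} for suitable constants $\epsilon>0$, $M=1$ and $\delta>0$, and then to conclude by applying Theorem~\ref{thm:soundness}. Condition~1 (nonnegativity) is immediate, since the softplus activation applied to the last layer of $V_\nu$ forces $V_\nu(\mathbf{x})>0$ for every $\mathbf{x}\in\mathcal{X}$. Condition~3 is short as well: every $\mathbf{x}\in\mathcal{X}\backslash\Stable$ lies in some grid cell, which then intersects $\mathcal{X}\backslash\Stable$ and hence belongs to $\text{Cells}_{\mathcal{X}\backslash\Stable}$; soundness of the IA-AI lower bound $\underline{V}_\nu$ together with the verified inequality~\eqref{eq:stablecond} gives $V_\nu(\mathbf{x})\geq\underline{V}_\nu(\text{cell})>M+L_V\cdot\Delta_\theta$. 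Since $\text{Cells}_{\mathcal{X}\backslash\Stable}$ is finite, $\delta:=\min_{\text{cell}}\{\underline{V}_\nu(\text{cell})-M-L_V\cdot\Delta_\theta\}$ is strictly positive, and the bound above sharpens to $V_\nu(\mathbf{x})\geq M+L_V\cdot\Delta_\theta+\delta$ for all $\mathbf{x}\in\mathcal{X}\backslash\Stable$; this is exactly condition~3 with $\Delta=\Delta_\theta$, the maximal step size (or an upper bound on it) of the system in closed loop with $\pi_\theta$.

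The heart of the argument is condition~2, which the verifier checks only in the strengthened form~\eqref{eq:expdecstricter} and only at the finitely many vertices of $\tilde{\mathcal{X}}_{\geq M}$; this discrete check must be lifted to all $\mathbf{x}\in\mathcal{X}$ with $V_\nu(\mathbf{x})\geq M$. Fix such an $\mathbf{x}$ and let $\tilde{\mathbf{x}}$ be the center of the grid cell containing it. By construction, $\tilde{\mathcal{X}}_{\geq M}$ contains the center of every cell that meets $C:=\{\mathbf{x}\in\mathcal{X}\mid V_\nu(\mathbf{x})\geq M\}$, so $\tilde{\mathbf{x}}\in\tilde{\mathcal{X}}_{\geq M}$, and $\|\mathbf{x}-\tilde{\mathbf{x}}\|_1<\tau$ by definition of the mesh. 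Chaining the Lipschitz constants of $\pi_\theta$, $f$ and $V_\nu$ gives, for every fixed $\omega$,
\begin{equation*}
\big|V_\nu\big(f(\mathbf{x},\pi_\theta(\mathbf{x}),\omega)\big)-V_\nu\big(f(\tilde{\mathbf{x}},\pi_\theta(\tilde{\mathbf{x}}),\omega)\big)\big|\leq L_V\cdot L_f\cdot(L_\pi+1)\cdot\|\mathbf{x}-\tilde{\mathbf{x}}\|_1,
\end{equation*}
together with $|V_\nu(\mathbf{x})-V_\nu(\tilde{\mathbf{x}})|\leq L_V\cdot\|\mathbf{x}-\tilde{\mathbf{x}}\|_1$. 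Integrating the first inequality over $\omega\sim d$ (the disturbance is unchanged, so the bound passes through the expectation) and combining with~\eqref{eq:expdecstricter} at $\tilde{\mathbf{x}}$ --- which genuinely holds for the true expectation, because the verifier checks~\eqref{eq:expdecstricter} against a sound \emph{upper} bound on $\mathbb{E}_{\omega\sim d}[V_\nu(f(\tilde{\mathbf{x}},\pi_\theta(\tilde{\mathbf{x}}),\omega))]$ --- yields
\begin{equation*}
\mathbb{E}_{\omega\sim d}\big[V_\nu\big(f(\mathbf{x},\pi_\theta(\mathbf{x}),\omega)\big)\big]-V_\nu(\mathbf{x})< -\tau\cdot K+K\cdot\|\mathbf{x}-\tilde{\mathbf{x}}\|_1\leq 0,
\end{equation*}
since $K=L_V\cdot(L_f\cdot(L_\pi+1)+1)$ is precisely the sum of the two Lipschitz factors above and $\|\mathbf{x}-\tilde{\mathbf{x}}\|_1<\tau$. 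Hence $g(\mathbf{x}):=V_\nu(\mathbf{x})-\mathbb{E}_{\omega\sim d}[V_\nu(f(\mathbf{x},\pi_\theta(\mathbf{x}),\omega))]$ is strictly positive at every $\mathbf{x}\in C$.

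To promote this to a uniform $\epsilon>0$, I would use compactness: $C$ is a closed subset of the compact set $\mathcal{X}$ (because $V_\nu$ is continuous), hence compact, and $g$ is continuous, since $V_\nu$, $f$ and $\pi_\theta$ are continuous, $f$ maps into the compact set $\mathcal{X}$ so $V_\nu\circ f$ is uniformly bounded, and dominated convergence then makes $\mathbf{x}\mapsto\mathbb{E}_{\omega\sim d}[V_\nu(f(\mathbf{x},\pi_\theta(\mathbf{x}),\omega))]$ continuous. Therefore $g$ attains a minimum $\epsilon:=\min_{\mathbf{x}\in C}g(\mathbf{x})>0$ (if $C=\emptyset$, condition~2 holds vacuously and any $\epsilon>0$ works), so $V_\nu$ satisfies condition~2 with this $\epsilon$ and $M=1$. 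With conditions~1--3 all established, $V_\nu$ is an $(\epsilon,M,\delta)$-sRSM for $\Stable$, and Theorem~\ref{thm:soundness} yields that $\Stable$ is a.s.~asymptotically stable under $\pi_\theta$. I expect the main obstacle to be exactly this lifting step for condition~2: arranging that the accumulated Lipschitz perturbation of the expected value telescopes precisely into the slack $\tau\cdot K$ built into~\eqref{eq:expdecstricter}, and then upgrading pointwise strict decrease to a uniform $\epsilon$ via compactness --- the latter resting on soundness of the IA-AI-based expected-value bound and of the set $\tilde{\mathcal{X}}_{\geq M}$, both of which are furnished by the construction in Section~\ref{sec:verifier} and the results of~\cite{lechner2021stability}.
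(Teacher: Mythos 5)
Your proof is correct and follows essentially the same route as the paper's: conditions~1 and~3 are handled identically, and condition~2 is lifted from the discretization to all of $\{\mathbf{x}\mid V_\nu(\mathbf{x})\geq M\}$ by the same Lipschitz chaining that produces $K=L_V\cdot(L_f\cdot(L_\pi+1)+1)$ and absorbs the perturbation into the slack $\tau\cdot K$ built into \eqref{eq:expdecstricter}. The only immaterial difference is how the uniform $\epsilon$ is extracted: you take the minimum of the continuous gap over the compact set $\{V_\nu\geq M\}$ (the argument sketched in the paper's main text), whereas the appendix proof sets $\epsilon$ to be the minimum of the verified slack over the finite set $\tilde{\mathcal{X}}_{\geq M}$ and shows the Lipschitz chain bounds the gap below by that $\epsilon$ directly.
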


\subsection{Adaptation into a formal verification procedure}\label{sec:verificationadaptation}

To conclude this section, we show that Algorithm~\ref{alg:algorithm} can be easily adapted into a formal verification procedure for showing that $\Stable$ is a.s.~asymptotically stable under some given control policy $\pi$. This adaptation only assumes that $\pi$ is Lipschitz continuous with a given Lipschitz constant $L_\pi$, or alternatively that it is a neural network policy with Lipschitz continuous activation functions in which case we use the method of~\cite{SzegedyZSBEGF13} to compute its Lipschitz constant $L_\pi$.

Instead of jointly learning the control policy and the sRSM, the formal verification procedure now only learns a neural network sRSM $V_\nu$. This is done by executing the analogous learner-verifier loop described in Algorithm~\ref{alg:algorithm}. The only difference happens in the learner module, where now only the parameters $\nu$ of the sRSM neural network are learned. Hence, the loss function in \eqref{eq:loss} that is used in (line~8, Algorithm~\ref{alg:algorithm}) has the same form as in Section~\ref{sec:learner}, but now it only takes parameters $\nu$ as input:
\begin{equation*}
    \loss(\nu) = \loss_{\text{cond~2}}(\nu) + \loss_{\text{cond~3}}(\nu).
\end{equation*}
Additionally, the control policy initialization in (line~5, Algorithm~\ref{alg:algorithm}) becomes redundant because the control policy $\pi$ is given. Apart from these two changes, the formal verification procedure remains identical to Algorithm~\ref{alg:algorithm} and its correctness follows from Theorem~\ref{thm:verifiermain}.




\begin{figure}[t]
    \centering
    \includegraphics[width=0.8\linewidth]{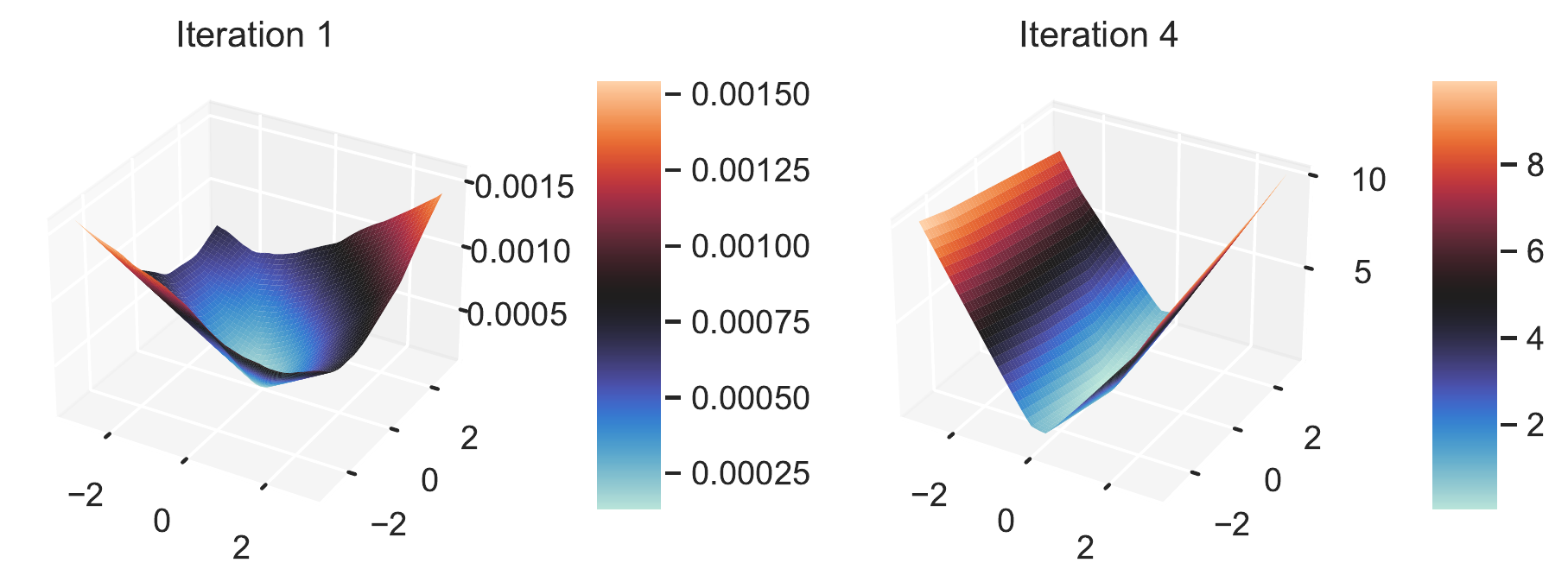}
    \caption{Visualization of the sRSM candidate after 1 and 4 iterations of our algorithm for the inverted pendulum task. The candidate after 1 iteration does not satisfy all sRSM conditions, while the candidate after 4 iterations is an sRSM. }
    \label{fig:pend1}
\end{figure}

\begin{figure}[t]
    \centering
    \includegraphics[width=0.8\linewidth]{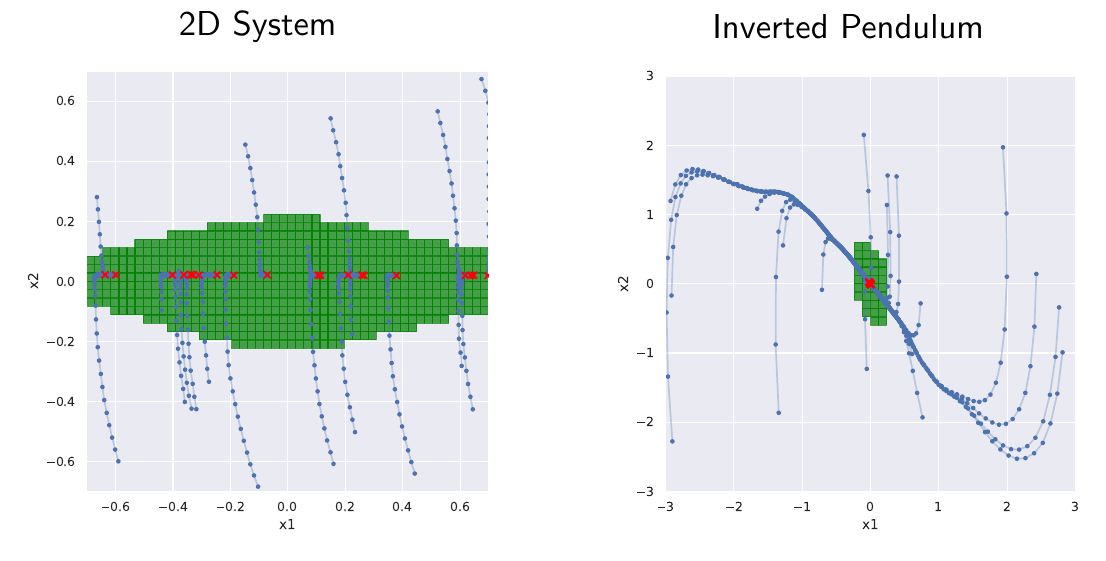}
    \caption{Visualization of the learned stabilizing sets in green, in which the system will remain with probability~1.}
    \label{fig:kernels}
\end{figure}

\section{Experimental Results}\label{sec:experiments}
In this section, we experimentally evaluate the effectiveness of our method.
We consider the same experimental setting and the two benchmarks studied in~\cite{lechner2021stability}. However, in contrast to~\cite{lechner2021stability}, we do not assume that the stabilization sets are closed under system dynamics and that the system stabilizes immediately upon reaching the stabilization set. In our evaluation, we modify both environments so that this assumption is violated. The goal of our evaluation is to confirm that our method based on sRSMs can in practice learn policies that formally guarantee a.s.~asymptotic stability even when the stabilization set is not closed under system dynamics.

We parameterize both $\pi_\theta$ and $V_\nu$ by two fully-connected neural networks with 2 hidden ReLU layers, each with 128 neurons. Below we describe both benchmarks considered in our evaluation, and refer the reader to Appendix~\ref{app:experimentsdetails} for further details and formal definitions of environment dynamics.

The first benchmark is a two-dimensional linear dynamical system with non-linear control bounds and is of the form $x_{t+1} = Ax_t + Bg(u_t) + \omega$, where $\omega$ is a stochastic disturbance vector sampled from a zero-mean triangular distribution. The function $g$ clips the action to stay within the interval [1, -1].
The state space is $\mathcal{X} = \{x \mid |x_1| \le 0.7, |x_2| \le 0.7\}$ and we want to learn a policy for the stabilizing~set
\begin{eqnarray*}
\begin{split}
    \mathcal{X}_s = \mathcal{X} \backslash \Big( &\{x \mid -0.7 \le x_1 \le -0.6, -0.7\le x_2 \le -0.4\} \\
    &\bigcup \{x \mid 0.6 \le x_1 \le 0.7, 0.4 \le x_2 \le 0.7\} \Big).
\end{split}
\end{eqnarray*}

\begin{table}[t]
    \centering
    \begin{tabular}{c|c c c}
      \toprule
      Benchmark & Iters. & Mesh ($\tau$) & Runtime\\
      \midrule
      2D system & 5 & 0.0007 & 3660 s\\
      Pendulum & 4  & 0.003 & 2619 s\\
      \bottomrule
    \end{tabular}
    \caption{Results of our experimental evaluation. The first column shows benchmark names. The second column shows the numer of learner-verifier loop iterations needed to successfully learn and verify a control policy and an sRSM. The third column shows the mesh of the used discretization grid. The fourth column shows runtime in seconds.}
        \label{tab:table1}
\end{table}


The second benchmark is a modified version of the inverted pendulum problem adapted from the OpenAI gym \cite{brockman2016openai}. Note that this benchmark has non-polynomial dynamics, as its dynamics function involves a sine function (see Appendix~\ref{app:experimentsdetails}).
The system is expressed by two state variables that represent the angle and the angular velocity of the pendulum.  Contrary to the original task, the problem considered here introduces triangular-shaped random noise to the state after each update step.
The state space is define as $\mathcal{X} = \{x \mid |x_1| \le 3, |x_2| \le 3\}$, and objective of the agent is to stabilize the pendulum within the stabilizing set
\begin{eqnarray*}
\begin{split}
    \mathcal{X}_s = \mathcal{X} \backslash \Big( &\{x \mid -3 \le x_1 \le -2.9, -3\le x_2 \le 0\} \\
    &\bigcup \{x \mid 2.9 \le x_1 \le 3, 0 \le x_2 \le 3\} \Big).
\end{split}
\end{eqnarray*}
For both tasks, our algorithm could find valid sRSMs and prove stability. The runtime characteristics, such as the number of iterations and total runtime, is shown in Table~\ref{tab:table1}. In Figure~\ref{fig:pend1} we plot the sRSM found by our algorithm for the inverted pendulum task. We also visualize for both tasks in Figure~\ref{fig:kernels} in green the subset of $\Stable$ implied by the learned sRSM in which the system stabilizes. Finally, in Figure~\ref{fig:contours} we show the contour lines of the expected stabilization time bounds that are obtained by applying Theorem~\ref{thm:bound} to the learned sRSMs.

\begin{figure}[t]
\centering
\includegraphics[width=0.7\linewidth]{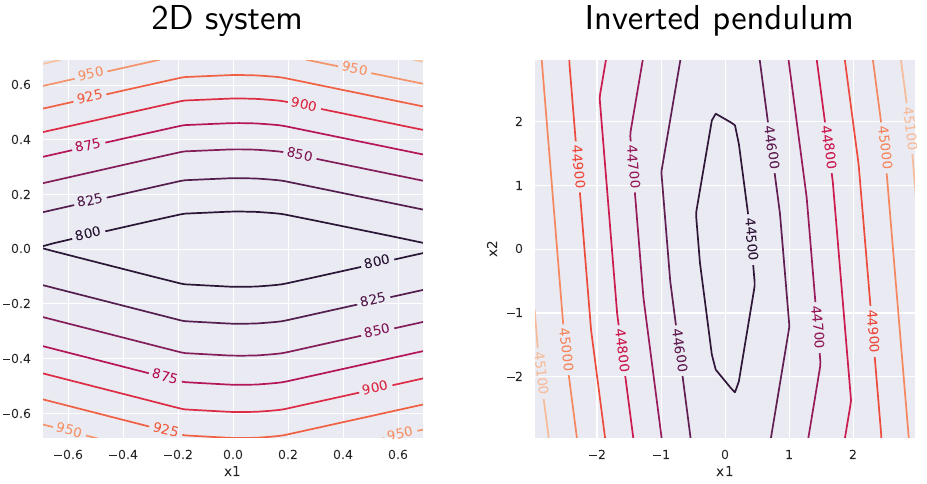}
\caption{Contour lines of the expected stabilization time implied by Theorem~\ref{thm:bound} for the 2D system task on the left and the inverted pendulum task on the right.}
\label{fig:contours}
\end{figure}

\paragraph{Limitations} We conclude by discussing limitations of our appraoch. Verification of neural networks is inherently a computationally difficult problem~\cite{katz2017reluplex,BerkenkampTS017,salzer2021reachability}. Our method is subject to this barrier as well. In particular, the complexity of the grid decomposition routine for checking the expected decrease condition is exponential in the dimension of the system state space.
However, a key advantage of our approach is that the complexity is only linear in the size of the neural network policy. Consequently, our approach allows learning and verifying networks that are of the size of typical networks used in reinforcement learning \cite{schulman2017proximal}. 
Moreover, our grid decomposition procedure runs entirely on accelerator devices, including CPUs, GPUs, and TPUs, thus leveraging future advances in these computing devices.
A technical limitation of our learning procedure is that it is restricted to compact state spaces. However, this is a standard assumption in control theory and reinforcement learning. Our theoretical results are applicable to arbitrary (potentially unbounded) state spaces, as shown in Fig.~\ref{fig:srsmexample}.


\section{Related Work}
\label{sec:relatedwork}

\paragraph{Stability for deterministic systems} Most early works on control with stability constraints rely either on hand-designed certificates or their computation via sum-of-squares (SOS) programming~\cite{henrion2005positive,parrilo2000structured}. Automation via SOS programming is restricted to problems with polynomial dynamics and does not scale well with dimension. Learning-based methods present a promising approach to overcome these limitations~\cite{RichardsB018,Jin20,ChangG21}. In particular, the methods of~\cite{ChangRG19,AbateAGP21} also learn a control policy and a Lyapunov function as neural networks by using a learner-verifier framework that our method builds on and extends to stochastic systems.

\paragraph{Stability for stochastic systems} While the theory behind stochastic system stability is well studied~\cite{kushner1965stability,Kushner14}, only a few works consider automated controller synthesis with formal stability guarantees for stochastic systems with continuous dynamics. The methods of~\cite{CrespoS03,Vaidya15} are numerical and certify weaker notions of stability. Recently,~\cite{lechner2021stability,ZikelicLCH22} used RSMs and learn a stabilizing policy together with an RSM that certifies a.s.~asymptotic stability. However, this method assumes closedness under system dynamics and essentially considers the stability problem as a reachability problem.
In contrast, our proof in Section \ref{sec:theory} introduces a new type of reasoning about supermartingales which allows us to handle stabilization without prior knowledge of a set that is closed under the system dynamics.

\paragraph{Reachability and safety for stochastic systems} Comparatively more works have studied controller synthesis in stochastic systems with formal reachability and safety guarantees. A number of methods abstract the system as a finite-state Markov decision process (MDP) and synthesize a controller for the MDP to provide formal reachability or safety guarantees over finite time horizon~\cite{SoudjaniGA15,LavaeiKSZ20,cauchi2019stochy,VinodGO19}. A learning-based approach for learning a control policy that provides formal reachability and avoidance infinite time horizon guarantees was proposed in~\cite{ZikelicLHC23}.


\paragraph{Safe exploration RL} Safe exploration RL restricts exploration of RL algorithms in a way that a given safety constraint is satisfied. This is typically ensured by learning the system dynamics' uncertainty and limiting exploratory actions within a high probability safe region via Gaussian Processes~\cite{Koller2018LearningBasedMP,Turchetta2019SafeEF}, linearized models~\cite{Dalal2018SafeEI}, deep robust regression~\cite{Liu2020RobustRF} and Bayesian neural networks~\cite{lechner2021infinite}.

\paragraph{Probabilistic program analysis} Ranking supermartingales were originally proposed for proving a.s.~termination in probabilistic programs (PPs)~\cite{ChakarovS13}. Since then, martingale-based methods have been used for termination~\cite{ChatterjeeFNH16,ChatterjeeFG16,AbateGR20,ChatterjeeGNZZ21} safety~\cite{ChatterjeeNZ17,TakisakaOUH21,ChatterjeeGMZ22} and recurrence and persistence~\cite{ChakarovVS16} analysis in PPs, with the latter being equivalent to stability. However, the persistence certificate of~\cite{ChakarovVS16} is substantially different from ours. In particular, the certificate of~\cite{ChakarovVS16} requires strict expected decrease outside the stabilizing set and non-strict expected decrease within the stabilizing set. In contrast, our sRSMs require strict expected decrease outside and only within a small part of the stabilizing set (see Definition~\ref{def:stabilizingrsm}). We also note that the certificate of~\cite{ChakarovVS16} cannot be combined with our learner-verifier procedure. Indeed, since our verifier module discretizes the state space and verifies a stricter condition at discretization vertices, if we tried to verify an instance of the certificate of~\cite{ChakarovVS16} then we would be verifying the strict expected decrease condition over the whole state space. But this condition is not satisfiable over compact state spaces, as any continuous function must admit a global minimum.


\section{Conclusion}\label{sec:limitconclusion}

In this work, we developed a method for learning control policies for stochastic systems with formal guarantees about the systems' a.s.~asymptotic stability. Compared to the existing literature, which assumes that the stabilizing set is closed under system dynamics and cannot be left once entered, our approach does not impose this assumption. Our method is based on the novel notion of stabilizing ranking supermartingales (sRSMs) that serve as a formal certificate of a.s.~asymptotic stability. We experimentally showed that our learning procedure is able to learn stabilizing policies and stability certificates in practice.

\bibliographystyle{splncs04}
\bibliography{bibliography}

\appendix

\section{Overview of Probability and Martingale Theory}\label{sec:martingales}

\paragraph{Probability theory} A {\em probability space} is an ordered triple $(\Omega,\mathcal{F},\mathbb{P})$ consisting of a non-empty {\em sample space} $\Omega$, a {\em $\sigma$-algebra} $\mathcal{F}$ over $\Omega$ (i.e.~a collection of subsets of $\Omega$ that contains the empty set $\emptyset$ and is closed under complementation and countable union), and a {\em probability measure} $\mathbb{P}$ over $\mathcal{F}$ which is a function $\mathbb{P}:\mathcal{F}\rightarrow[0,1]$ that satisfies the three Kolmogorov axioms: (1)~$\mathbb{P}[\emptyset]=0$, (2)~$\mathbb{P}[\Omega\backslash A]=1-\mathbb{P}[A]$ for each $A\in\mathcal{F}$, and (3)~$\mathbb{P}[\cup_{i=0}^\infty A_i]=\sum_{i=0}^\infty\mathbb{P}[A_i]$ for any sequence $(A_i)_{i=0}^\infty$ of pairwise disjoint sets in $\mathcal{F}$. Given a probability space $(\Omega,\mathcal{F},\mathbb{P})$, a {\em random variable} is a function $X:\Omega\rightarrow\mathbb{R}\cup\{\pm\infty\}$ that is $\mathcal{F}$-measurable, i.e.~for each $a\in\mathbb{R}$ we have $\{\omega\in\Omega\mid X(\omega)\leq a\}\in\mathcal{F}$. $\mathbb{E}[X]$ denotes the {\em expected value} of $X$. A {\em (discrete-time) stochastic process} is a sequence $(X_i)_{i=0}^{\infty}$ of random variables in $(\Omega,\mathcal{F},\mathbb{P})$.
	
\paragraph{Conditional expectation} Let $(\Omega,\mathcal{F},\mathbb{P})$ be a probability space and $X$ be a random variable in $(\Omega,\mathcal{F},\mathbb{P})$. Given a sub-sigma-algebra $\mathcal{F}'\subseteq\mathcal{F}$, a {\em conditional expectation} of $X$ given $\mathcal{F}'$ is an $\mathcal{F}'$-measurable random variable $Y$ such that, for each $A\in\mathcal{F}'$, we have 
\[ \mathbb{E}[X\cdot\mathbb{I}_A]=\mathbb{E}[Y\cdot\mathbb{I}_A].\]
Here $\mathbb{I}_A:\Omega\rightarrow \{0,1\}$ is an {\em indicator function} of $A$, defined via $\mathbb{I}_A(\omega)=1$ if $\omega\in A$, and $\mathbb{I}_A(\omega)=0$ if $\omega\not\in A$. If $X$ is real-valued and nonnegative, then a conditional expectation of $X$ given $\mathcal{F}'$ exists and is almost-surely unique, i.e.~for any two $\mathcal{F}'$-measurable random variables $Y$ and $Y'$ which are conditional expectations of $X$ given $\mathcal{F}'$ we have that $\mathbb{P}[Y= Y']=1$~\cite{Williams91}. Therefore, we may pick any such random variable as a canonical conditional expectation and denote it by $\mathbb{E}[X\mid \mathcal{F}']$.

\paragraph{Stopping time} A sequence of sigma-algebras $\{\mathcal{F}_i\}_{i=0}^{\infty}$ with $\mathcal{F}_0\subseteq \mathcal{F}_1\subseteq\dots\subseteq\mathcal{F}$ is a {\em filtration} in the probability space $(\Omega,\mathcal{F},\mathbb{P})$. A {\em stopping time} with respect to a filtration $\{\mathcal{F}_i\}_{i=0}^{\infty}$ is a random variable $T:\Omega\rightarrow \mathbb{N}_0\cup\{\infty\}$ such that, for every $i\in\mathbb{N}_0$, we have $\{\omega\in\Omega\mid T(\omega)\leq i\}\in \mathcal{F}_i$. Intuitively, $T$ may be viewed as the time step at which some stochastic process should be ``stopped'', and since $\{\omega\in\Omega\mid T(\omega)\leq i\}\in \mathcal{F}_i$ the decision to stop at the time step $i$ is made solely by using the information available in the first $i$ time steps.

\paragraph{Supermartingales and ranking supermartingales} Let $(\Omega,\mathcal{F},\mathbb{P})$ be a probability space, let $\epsilon\geq 0$ and let $T$ be a stopping time with respect to a filtration $\{\mathcal{F}_i\}_{i=0}^{\infty}$. An {\em $\eps$-ranking supermartingale ($\eps$-RSM) with respect to $T$} is a stochastic process $(X_i)_{i=0}^{\infty}$ such that
\begin{compactitem}
    \item $X_i$ is $\mathcal{F}_i$-measurable, for each $i\geq 0$,
    \item $X_i(\omega)\geq 0$, for each $i\geq 0$ and $\omega\in\Omega$, and
    \item $\mathbb{E}[X_{i+1} \mid \mathcal{F}_i](\omega) \leq X_i(\omega) - \eps\cdot\mathbb{I}_{T>i}(\omega)$, for each $i\geq 0$ and $\omega\in\Omega$.
\end{compactitem}
A {\em supermartingale} with respect to a filtration $\{\mathcal{F}_i\}_{i=0}^{\infty}$ is a stochastic process $(X_i)_{i=0}^{\infty}$ which satisfies conditions~1 and~3 above with $\eps=0$ (thus we define supermartingales only with respect to the filtration and not the stopping time).

We now state two results on RSMs and supermartingales that we will use in our proofs. The first is a result on RSMs that was originally presented in works on termination analysis of probabilistic programs~\cite{FioritiH15,ChatterjeeFNH16}. The second result (see~\cite{Kushner14}, Theorem~7.1) is a concentration bound on the supremum value of a nonnegative supemartingale.

\begin{proposition}\label{prop:rsm}
Let $(\Omega,\mathcal{F},\mathbb{P})$ be a probability space, let $(\mathcal{F}_i)_{i=0}^\infty$ be a filtration and let $T$ be a stopping time with respect to $(\mathcal{F}_i)_{i=0}^\infty$. Suppose that $(X_i)_{i=0}^{\infty}$ is an $\eps$-RSM with respect to $T$, for some $\eps>0$. Then
\begin{compactenum}
    \item $\mathbb{P}[T<\infty] = 1$,
    \item $\mathbb{E}[T] \leq \frac{\mathbb{E}[X_0]}{\eps}$, and
    \item $\mathbb{P}[ T\geq t] \leq \frac{\mathbb{E}[X_0]}{\eps\cdot t}$, for each $t\in\mathbb{N}$.
\end{compactenum}
\end{proposition}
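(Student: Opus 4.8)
The plan is to reduce all three claims to a single uniform-in-$i$ bound on $\mathbb{E}[\min(i,T)]$, which I obtain by converting the $\epsilon$-RSM into an honest nonnegative supermartingale. First I would record the elementary identity $\mathbb{I}_{T>i} = \min(i+1,T) - \min(i,T)$ (check the three cases $T>i$, $T\le i$, $T=\infty$), and note that $\min(i+1,T)$ is $\mathcal{F}_i$-measurable because $T$ is a stopping time: for $k\le i$ we have $\{\min(i+1,T)\le k\} = \{T\le k\}\in\mathcal{F}_k\subseteq\mathcal{F}_i$, and for $k>i$ it is the whole sample space. Then I would define the auxiliary process $Z_i = X_i + \epsilon\cdot\min(i,T)$ and verify, by plugging the defining inequality of the $\epsilon$-RSM into $\mathbb{E}[Z_{i+1}\mid\mathcal{F}_i]$ and pulling the $\mathcal{F}_i$-measurable term $\epsilon\cdot\min(i+1,T)$ out of the conditional expectation, that $\mathbb{E}[Z_{i+1}\mid\mathcal{F}_i]\le Z_i$. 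Since $X_i\ge 0$ and $\min(i,T)\ge 0$, the process $(Z_i)_{i=0}^\infty$ is a nonnegative supermartingale.

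Taking expectations and using the tower property gives $\mathbb{E}[Z_i]\le\mathbb{E}[Z_0]=\mathbb{E}[X_0]$ (as $\min(0,T)=0$), i.e. $\mathbb{E}[X_i] + \epsilon\cdot\mathbb{E}[\min(i,T)] \le \mathbb{E}[X_0]$ for all $i$; dropping the nonnegative term $\mathbb{E}[X_i]$ yields $\mathbb{E}[\min(i,T)]\le \mathbb{E}[X_0]/\epsilon$. (If $\mathbb{E}[X_0]=\infty$, all three conclusions are vacuous, so I may assume it finite; a one-line induction then shows each $\mathbb{E}[X_i]$ is finite, which is all the integrability needed above.) Since $\min(i,T)\uparrow T$ pointwise as $i\to\infty$, the monotone convergence theorem gives $\mathbb{E}[T] = \lim_{i\to\infty}\mathbb{E}[\min(i,T)] \le \mathbb{E}[X_0]/\epsilon$, which is claim~2. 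Claim~1 is then immediate, since $\mathbb{E}[T]<\infty$ forces $\mathbb{P}[T=\infty]=0$. For claim~3 I would apply Markov's inequality to the nonnegative random variable $T$: $\mathbb{P}[T\ge t]\le \mathbb{E}[T]/t \le \mathbb{E}[X_0]/(\epsilon\cdot t)$ for every $t\in\mathbb{N}$; alternatively, since $\{T\ge t\}=\{\min(t,T)\ge t\}$, applying Markov directly to $\min(t,T)$ and using the uniform bound avoids even invoking claim~2.

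The substantive points are bookkeeping rather than conceptual: confirming the $\mathcal{F}_i$-measurability of $\min(i+1,T)$ so that $(Z_i)$ is adapted, and justifying the interchange of limit and expectation — which is precisely where monotone convergence is essential, as we have no a priori integrability of $T$ before the estimate is in hand. Everything else is a routine manipulation of conditional expectations together with Markov's inequality.
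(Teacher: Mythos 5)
Your proof is correct, but note that the paper itself does not prove Proposition~\ref{prop:rsm}: it is stated as a known result and attributed to the probabilistic-program termination literature (Fioriti--Hermanns and Chatterjee et al.), so there is no in-paper argument to compare against. Your argument is in fact the standard one from those references: the identity $\mathbb{I}_{T>i}=\min(i+1,T)-\min(i,T)$, the compensated process $Z_i=X_i+\epsilon\cdot\min(i,T)$ shown to be a nonnegative supermartingale (with the measurability of $\min(i+1,T)$ with respect to $\mathcal{F}_i$ correctly justified from the stopping-time property), the uniform bound $\mathbb{E}[\min(i,T)]\leq\mathbb{E}[X_0]/\epsilon$, and monotone convergence. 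Your handling of integrability is also right: for nonnegative variables the tower property needs no a priori integrability, and finiteness of each $\mathbb{E}[X_i]$ follows by induction once $\mathbb{E}[X_0]<\infty$. The one small inaccuracy is the remark that all three conclusions are vacuous when $\mathbb{E}[X_0]=\infty$: claims~2 and~3 are, but claim~1 is not, and it still holds provided $X_0<\infty$ almost surely (condition on the $\mathcal{F}_0$-measurable events $\{X_0\leq n\}$, apply your bound under each conditional measure, and let $n\to\infty$). This corner case never arises in the paper's application, where $X_0=V(\mathbf{x}_0)$ is a deterministic finite value, so it does not affect anything downstream; it only matters if one wants the proposition in the full generality in which it is stated.
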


\begin{proposition}\label{prop:bound}
Let $(\Omega,\mathcal{F},\mathbb{P})$ be a probability space and let $(\mathcal{F}_i)_{i=0}^\infty$ be a filtration. Let $(X_i)_{i=0}^{\infty}$ be a nonnegative supermartingale with respect to $(\mathcal{F}_i)_{i=0}^\infty$. Then, for every $\lambda>0$, we have $\mathbb{P}[ \sup_{i\geq 0}X_i \geq \lambda ] \leq \frac{\mathbb{E}[X_0]}{\lambda}$.
\end{proposition}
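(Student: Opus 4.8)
The plan is to prove this maximal inequality (Ville's inequality) by combining a first-passage stopping time with the optional-stopping property for bounded stopping times. Fix $\lambda > 0$ and define the first-passage time $T = \inf\{i \geq 0 \mid X_i \geq \lambda\}$, with the convention $T = \infty$ if $X_i < \lambda$ for all $i$. Since $\{T \leq i\} = \bigcup_{j \leq i}\{X_j \geq \lambda\}$ and each $X_j$ is $\mathcal{F}_j$-measurable with $\mathcal{F}_j \subseteq \mathcal{F}_i$ for $j \leq i$, the event $\{T \leq i\}$ lies in $\mathcal{F}_i$, so $T$ is a stopping time with respect to $(\mathcal{F}_i)_{i=0}^\infty$.

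First I would establish a finite-horizon estimate. For each $n \in \mathbb{N}_0$ I consider the stopped process $(X_{T \wedge n})_{n \geq 0}$. Writing $X_{T \wedge (n+1)} - X_{T \wedge n} = \mathbb{I}_{T > n}\,(X_{n+1} - X_n)$ and using that $\mathbb{I}_{T > n}$ is $\mathcal{F}_n$-measurable together with $\mathbb{E}[X_{n+1} \mid \mathcal{F}_n] \leq X_n$, a one-line computation gives $\mathbb{E}[X_{T \wedge (n+1)} \mid \mathcal{F}_n] \leq X_{T \wedge n}$; hence the stopped process is again a nonnegative supermartingale and $\mathbb{E}[X_{T \wedge n}] \leq \mathbb{E}[X_{T \wedge 0}] = \mathbb{E}[X_0]$. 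On the event $\{T \leq n\}$ we have $X_{T \wedge n} = X_T \geq \lambda$ by the definition of $T$, while on its complement $X_{T \wedge n} \geq 0$ by nonnegativity, so $\mathbb{E}[X_{T \wedge n}] \geq \lambda \cdot \mathbb{P}[T \leq n]$. Combining the two inequalities yields $\mathbb{P}[T \leq n] \leq \mathbb{E}[X_0]/\lambda$ for every $n$.

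Next I would pass to the limit $n \to \infty$. The events $\{T \leq n\}$ increase to $\{T < \infty\} = \{\exists\, i \geq 0 \text{ s.t. } X_i \geq \lambda\}$, so continuity of measure from below gives $\mathbb{P}[\exists\, i : X_i \geq \lambda] \leq \mathbb{E}[X_0]/\lambda$. The remaining subtlety is that this hitting event may be strictly smaller than $\{\sup_{i \geq 0} X_i \geq \lambda\}$, because the supremum can equal $\lambda$ without being attained at any finite index. I would close this gap with a second limiting argument at a strictly lower threshold: for any $\lambda' < \lambda$ we have the inclusion $\{\sup_i X_i \geq \lambda\} \subseteq \{\exists\, i : X_i \geq \lambda'\}$ (if the supremum is at least $\lambda > \lambda'$, some term must exceed $\lambda'$), so the hitting bound applied at level $\lambda'$ gives $\mathbb{P}[\sup_i X_i \geq \lambda] \leq \mathbb{E}[X_0]/\lambda'$; letting $\lambda' \uparrow \lambda$ then yields the stated bound $\mathbb{P}[\sup_{i \geq 0} X_i \geq \lambda] \leq \mathbb{E}[X_0]/\lambda$.

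The only genuinely delicate step is this last boundary issue, distinguishing the supremum being attained from merely approached, which is precisely why the argument is run at a strictly smaller threshold $\lambda'$ and then optimized. Everything else is routine once the stopped process is identified as a supermartingale; this identification is exactly the bounded-stopping-time case of optional stopping and requires no integrability assumptions beyond the nonnegativity already hypothesized.
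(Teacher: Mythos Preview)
Your proof is correct and is the standard stopping-time argument for Ville's inequality. Note, however, that the paper does not supply its own proof of this proposition: it is stated in the appendix as a known result from martingale theory, with a citation to Kushner's book (Theorem~7.1 there), so there is no in-paper argument to compare against. Your treatment of the boundary case, distinguishing $\sup_i X_i \geq \lambda$ from the hitting event $\exists\, i: X_i \geq \lambda$ via the $\lambda' \uparrow \lambda$ trick, is the right way to handle that subtlety.
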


\section{Proofs of Theorem~\ref{thm:soundness} and Theorem~\ref{thm:bound}}\label{sec:thmproofs}

We now prove Theorem~1\ref{thm:soundness} and Theorem~\ref{thm:bound} from the main text of the paper. For each initial state $\mathbf{x}_0\in\mathcal{X}$, denote by $(\Omega_{\mathbf{x}_0},\mathcal{F}_{\mathbf{x}_0},\mathbb{P}_{\mathbf{x}_0})$ probability space over the set of all system trajectories that start in the initial state $\mathbf{x}_0$ that is induced by the Markov decision process semantics of the system~\cite{Puterman94}. We start both proofs by showing that, for every state $\mathbf{x}_0\in\mathcal{X}\backslash\Stable$, the sRSM $V$ for the set $\Stable$ gives rise to a mathematical RSM in the probability space $(\Omega_{\mathbf{x}_0},\mathcal{F}_{\mathbf{x}_0},\mathbb{P}_{\mathbf{x}_0})$.

\paragraph{Canonical filtration and stopping time} In order to formally show that $V$ can be instantiated as a mathematical RSM in this probability space, we first define the canonical filtration in this probability space and the stopping time with respect to which the mathematical RSM is defined.
Let $\mathbf{x}_0\in\mathcal{X}$ and consider the probability space $(\Omega_{\mathbf{x}_0},\mathcal{F}_{\mathbf{x}_0},\mathbb{P}_{\mathbf{x}_0})$. For each $i\in\mathbb{N}_0$, define $\mathcal{F}_i\subseteq \mathcal{F}$ to be the $\sigma$-algebra containing the subsets of $\Omega_{\mathbf{x}_0}$ that, intuitively, contain all trajectories in $\Omega_{\mathbf{x}_0}$ whose first $i$ states satisfy some specified property. Formally, we define $\mathcal{F}_i$ as follows. For each $j\in\mathbb{N}_0$, let $C_j:\Omega_{\mathbf{x}_0}\rightarrow \mathcal{X}$ be a map which to each trajectory $\rho=(\mathbf{x}_t,\mathbf{u}_t,\omega_t)_{t\in\mathbb{N}_0}\in\Omega_{\mathbf{x}_0}$ assigns the $j$-th state $\mathbf{x}_j$ along the trajectory. Then $\mathcal{F}_i$ is the smallest $\sigma$-algebra over $\Omega_{\mathbf{x}_0}$ with respect to which $C_0, C_1, \dots, C_i$ are all measurable, where $\mathcal{X}\subseteq\mathbb{R}^m$ is equipped with the induced Borel-$\sigma$-algebra (see Section~1,~\cite{Williams91}). Clearly $\mathcal{F}_0\subseteq\mathcal{F}_1\subseteq\dots$. We say that the sequence of $\sigma$-algebras $(\mathcal{F}_i)_{i=0}^\infty$ is the {\em canonical filtration} in the probability space $(\Omega_{\mathbf{x}_0},\mathcal{F}_{\mathbf{x}_0},\mathbb{P}_{\mathbf{x}_0})$.

We then define $T_S:\Omega_{\mathbf{x}_0}\rightarrow\mathbb{N}_0\cup\{\infty\}$ to be the first hitting time of the set $S = \{\mathbf{x}\in\mathcal{X} \mid V(\mathbf{x})\leq M\}$, i.e.~$T_S = \inf \{t\in \mathbb{N}_0 \mid \mathbf{x}_t\in S\}$. Since whether $T_S(\rho)\leq i$ depends solely on the first $i$ states along $\rho$, we clearly have $\{\rho\in\Omega_{\mathbf{x_0}}\mid T_S(\rho)\leq i\}\in \mathcal{F}_i$ for each $i$ and so $T_S$ is a stopping time with respect to $(\mathcal{F}_i)_{i=0}^\infty$.

We now prove the theorems.

\begin{theorem*}
If there exist $\epsilon, M, \delta > 0$ and an $(\epsilon,M,\delta)$-sRSM for $\Stable$, then $\Stable$ is a.s.~asymptotically stable.
\end{theorem*}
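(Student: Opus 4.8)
The plan is to prove \cref{thm:soundness} along the two-claim structure of the proof sketch. Fix $\mathbf{x}_0 \in \mathcal{X}$, work in the trajectory space $(\Omega_{\mathbf{x}_0},\mathcal{F}_{\mathbf{x}_0},\mathbb{P}_{\mathbf{x}_0})$ with the canonical filtration $(\mathcal{F}_i)_{i=0}^\infty$ introduced above, and set $S = \{\mathbf{x}\in\mathcal{X}\mid V(\mathbf{x})\le M\}$, which is contained in $\Stable$ by condition~3 of \cref{def:stabilizingrsm}. I would establish (Claim~1) that from every $\mathbf{x}_0$ the trajectory reaches $S$ with probability~$1$, and (Claim~2) that from every $\mathbf{x}_0 \in S$ the trajectory ever leaves $\Stable$ with probability at most $p = \frac{M+L_V\Delta}{M+L_V\Delta+\delta} < 1$; the theorem then follows by gluing excursions together via the Markov property.

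For Claim~1 I would reduce to the $\eps$-RSM machinery of \cref{prop:rsm}. Let $T_S$ be the first hitting time of $S$ and define $(X_i)_{i=0}^\infty$ by \eqref{eq:processmain}. Nonnegativity of $X_i$ is condition~1 of \cref{def:stabilizingrsm}. For the ranking inequality: when $i \ge T_S$ the process is constant and $\mathbb{I}_{T_S>i}=0$, so it holds trivially; when $i < T_S$ we have $\mathbf{x}_i \notin S$, hence $V(\mathbf{x}_i) > M$, so condition~2 applies and gives $\mathbb{E}[X_{i+1}\mid\mathcal{F}_i] = \mathbb{E}_{\omega\sim d}[V(f(\mathbf{x}_i,\pi(\mathbf{x}_i),\omega))] \le V(\mathbf{x}_i)-\eps = X_i - \eps$. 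Hence $(X_i)$ is an $\eps$-RSM with respect to $T_S$, and \cref{prop:rsm}(1) yields $\mathbb{P}_{\mathbf{x}_0}[T_S<\infty]=1$; parts (2)--(3) of \cref{prop:rsm} additionally supply the quantitative estimates needed for \cref{thm:bound}.

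For Claim~2 the key geometric fact is that, since $V$ is $L_V$-Lipschitz and each step changes the state by at most $\Delta$ in $\ell_1$-norm, the value of $V$ immediately after the trajectory leaves $S$ is at most $M+L_V\Delta$, whereas by condition~3 of \cref{def:stabilizingrsm} the trajectory can be outside $\Stable$ only where $V \ge M+L_V\Delta+\delta$. I would then apply the supremum concentration bound \cref{prop:bound}: taking the analogue of \eqref{eq:processmain} (now stopped at the \emph{return} to $S$) in the probability space of trajectories started at the exit state $\mathbf{y}$, this process is a nonnegative supermartingale because between exit and return the state lies in $\{V>M\}$ where condition~2 holds; with $\lambda = M+L_V\Delta+\delta$ and $\mathbb{E}[X_0] = V(\mathbf{y}) \le M+L_V\Delta$, \cref{prop:bound} bounds by $p$ the probability that this excursion ever reaches $\{V \ge M+L_V\Delta+\delta\}$, in particular the probability that it exits $\Stable$. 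Promoting this single-excursion estimate to the \emph{ever-leaving} statement of Claim~2 requires decomposing the trajectory from $\mathbf{x}_0 \in S$ into its successive (a.s.\ finite, by Claim~1) excursions out of $S$ and combining the per-excursion bound with the strong Markov property; this is the step that goes beyond a routine invocation of martingale theory.

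Finally, I would assemble the theorem. Let $\sigma_1$ be the first time the trajectory lies in $S$ and, recursively, $\sigma_{k+1}$ the first time after $\sigma_k$ that it lies in $S$ again having left $\Stable$ in between. Claim~1 makes every defined $\sigma_k$ a.s.\ finite, and the strong Markov property together with Claim~2 gives $\mathbb{P}_{\mathbf{x}_0}[\sigma_{k+1}\text{ defined}\mid\sigma_k\text{ defined}] \le p$, so $\mathbb{P}_{\mathbf{x}_0}[\sigma_{N+1}\text{ defined}] \le p^N \to 0$. Thus with probability~$1$ only finitely many $\sigma_k$ are defined: the trajectory eventually enters $S \subseteq \Stable$ and never leaves $\Stable$ afterwards, so $d(\mathbf{x}_t,\Stable)=0$ for all large $t$ and hence $\lim_{t\to\infty} d(\mathbf{x}_t,\Stable)=0$ almost surely. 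As $\mathbf{x}_0$ was arbitrary, $\Stable$ is a.s.\ asymptotically stable. I expect the main obstacle to be Claim~2 — both isolating the right stopped supermartingale and then upgrading the one-excursion bound to a bound on ever leaving $\Stable$ via the Markov structure — since this is exactly where reachability-style arguments are insufficient.
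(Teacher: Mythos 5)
Your architecture is the paper's: Claim~1 via the $\eps$-RSM of \eqref{eq:processmain} and Proposition~\ref{prop:rsm}, Claim~2 via a stopped nonnegative supermartingale and Proposition~\ref{prop:bound}, and a geometric $p^N$ assembly. Your Claim~1 argument and your single-excursion estimate (an excursion starting at an exit state $\mathbf{y}$ with $V(\mathbf{y})\le M+L_V\Delta$ reaches $\{V\ge M+L_V\Delta+\delta\}$ before returning to $S$ with probability at most $p$) are correct and match Appendix~\ref{sec:thmproofs}.

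The genuine gap is exactly the step you flag as the main obstacle and then defer: promoting the per-excursion bound to $\mathbb{P}_{\mathbf{x}_0}[\exists t\colon \mathbf{x}_t\notin\Stable]\le p$ for $\mathbf{x}_0\in S$. Your plan --- decompose into successive excursions out of $S$ and ``combine the per-excursion bound with the strong Markov property'' --- does not yield this. The event ``ever leave $\Stable$'' is a \emph{union} over all excursions out of $S$ of the event that that excursion escapes, and each excursion restarts at a fresh value $\le M+L_V\Delta$; a union of such events, each of conditional probability up to $p$, is not bounded by $p$. Nothing in Definition~\ref{def:stabilizingrsm} controls the \emph{number} of excursions: condition~2 is silent on $\{V<M\}$, so inside $S$ the process $V(\mathbf{x}_t)$ may increase in expectation and the trajectory may be ejected from $S$ infinitely often. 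Concretely, take $\mathcal{X}=\mathbb{R}$, $V(x)=|x|$, $M=L_V=\Delta=\delta=1$, $\Stable=(-3,3)$, dynamics $x\mapsto x\pm 1$ with up-probability $\tfrac{1-\eps}{2}$ when $|x|\ge 1$ and a deterministic unit step away from $0$ when $|x|<1$: all three sRSM conditions hold, yet the chain is positive recurrent and visits $\{|x|\ge 4\}$ infinitely often almost surely, so from $\mathbf{x}_0=0\in S$ the trajectory leaves $\Stable$ with probability $1$, not $\le p$. Hence the excursion-by-excursion route fails as a matter of mathematics, not merely of exposition; closing Claim~2 would require a single supermartingale argument that remains valid across re-entries into $S$, which the stated conditions do not supply.

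For comparison, the paper's own proof performs this promotion in one displayed chain whose third inequality replaces $\mathbb{P}[A\mid B]$ (escape during \emph{some} excursion) by $\sup_{\mathbf{x}_1}\mathbb{P}_{\mathbf{x}_1}[\text{the first excursion escapes}]$, justified only by an appeal to the Markov property. That is precisely the reduction your proposal also needs and likewise does not justify. So you have located the crux correctly, but your proof is not complete at this point, and the per-excursion strategy you sketch cannot be completed as described.
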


\begin{proof}
We need to show that $\mathbb{P}_{\mathbf{x}_0}[ \lim_{t\rightarrow\infty}d(\mathbf{x}_t,\Stable) = 0 ] = 1$ for every $\mathbf{x}_0\in\mathcal{X}$. We show this by proving the following two claims. First, we show that, from each initial state $\mathbf{x}_0\in\mathcal{X}$, the agent converges to and reaches $S = \{\mathbf{x}\in\mathcal{X} \mid V(\mathbf{x})\leq M\}$ with probability~$1$. The set $S$ is a subset of $\Stable$ by condition~$3$ in Definition~3 of sRSMs. Second, we show that once the agent is in $S$ it may leave $\Stable$ with probability at most $p=\frac{M+L_V\cdot \Delta}{M+L_V\cdot \Delta+\delta}<1$. We then prove that the two claims imply the theorem statement.

\medskip\noindent{\em Claim 1.} For each intial state $\mathbf{x}_0\in\mathcal{X}$, the agent converges to and reaches $S = \{\mathbf{x}\in\mathcal{X} \mid V(\mathbf{x})\leq M\}$ with probability~$1$.
    
\medskip\noindent To prove Claim~1, let $\mathbf{x}_0\in\mathcal{X}$. If $\mathbf{x}_0\in S$, then the claim trivially holds. So suppose w.l.o.g.~that $\mathbf{x}_0\not\in S$. We consider the probability space $(\Omega_{\mathbf{x}_0},\mathcal{F}_{\mathbf{x}_0},\mathbb{P}_{\mathbf{x}_0})$ of all system trajectories that start in $\mathbf{x}_0$, and for each $i\in\mathbb{N}_0$ we define a random variable $X_i$ in this probability space via
\begin{equation}\label{eq:process}
        X_i(\rho) = \begin{cases}
        V(\mathbf{x}_i), &\text{if } i<T_S(\rho) \\
        V(\mathbf{x}_{T_S(\rho)}), &\text{otherwise}
        \end{cases}
\end{equation}
for each trajectory $\rho=(\mathbf{x}_t,\mathbf{u}_t,\omega_t)_{t\in\mathbb{N}_0}\in\Omega_{\mathbf{x}_0}$. In words, $X_i$ is equal to the value of $V$ at the $i$-th state along the trajectory until $S$ is reached, upon which it becomes constant and equal to the value of $V$ upon first entry into $S$. We prove that $(X_i)_{i=0}^\infty$ is an $\eps$-RSM with respect to the stopping time $T_S$. To prove this claim, we check each defining property of $\eps$-RSMs:
\begin{compactitem}
    \item {\em Each $X_i$ is $\mathcal{F}_i$-measurable.} The value of $X_i$ is determined by the first $i$ states along a trajectory, so by the definition of the canonical filtration we have that $X_i$ is $\mathcal{F}_i$-measurable for each $i\geq 0$.
    \item {\em Each $X_i(\rho)\geq 0$.} Since each $X_i$ is defined in terms of $V$ and since we know that $V(\mathbf{x})\geq 0$ for each state $\mathbf{x}\in\mathcal{X}$ by condition~$1$ in Definition~3 of sRSMs, it follows that $X_i(\rho)\geq 0$ for each $i\geq 0$ and $\rho\in\Omega_{\mathbf{x}_0}$.
    \item {\em Each $\mathbb{E}[X_{i+1} \mid \mathcal{F}_i](\rho) \leq X_i(\rho) - \eps\cdot \mathbb{I}_{T_{\Stable}>i}(\rho)$.} First, we remark that the conditional expectation exists since $X_{i+1}$ is nonnegative for each $i\geq 0$. In order to prove the desired inequality, we distinguish between two cases.  Let $\rho=(\mathbf{x}_t,\mathbf{u}_t,\omega_t)_{t\in\mathbb{N}_0}$.
    
    First, consider the case $T_S(\rho)>i$. We have that $X_i(\rho)=V(\mathbf{x}_i)$. On the other hand, we have $\mathbb{E}[X_{i+1} \mid \mathcal{F}_i](\rho) = \mathbb{E}_{\omega\sim d}[V(f(\mathbf{x}_i,\pi(\mathbf{x}_i),\omega)]$. To see this, observe that $\mathbb{E}_{\omega\sim d}[V(f(\mathbf{x}_i,\pi(\mathbf{x}_i),\omega)]$ satisfies all the defining properties of conditional expectation since it is the expected value of $V$ at a subsequent state of $\mathbf{x}_i$, and recall that conditional expectation is a.s.~unique whenever it exists. Hence,
    \begin{equation*}
    \begin{split}
        \mathbb{E}[X_{i+1} \mid \mathcal{F}_i](\rho) &= \mathbb{E}_{\omega\sim d}[V(f(\mathbf{x}_i,\pi(\mathbf{x}_i),\omega)] \\
                                                     &\leq V(\mathbf{x}_i) - \eps = X_i(\rho) - \eps,
    \end{split}
    \end{equation*}
    where the inequality holds by condition~$2$ in Definition~3 of sRSMs and since $\mathbf{x}_i\not\in S$ as $T_S(\rho)>i$. This proves the desired inequality.
    
    Second, consider the case $T_S(\rho)\leq i$. We have $X_i(\rho)=V(\mathbf{x}_{T_S(\rho)})$ and $\mathbb{E}[X_{i+1} \mid \mathcal{F}_i](\rho)] = V(\mathbf{x}_{T_S(\rho)})$, so the desired inequality follows.
\end{compactitem}
Thus, we may use the first part of Proposition~1 to conclude that $\mathbb{P}_{\mathbf{x}_0}[T_S<\infty]=1$, equivalently $\mathbb{P}_{\mathbf{x}_0}[\exists\, t\in\mathbb{N}_0 \text{ s.t. } \mathbf{x}_t\in S] = 1$. This concludes the proof of Claim~1.

\medskip\noindent{\em Claim 2.} $\mathbb{P}_{\mathbf{x}_0}[\exists\, t\in\mathbb{N}_0 \text{ s.t. } \mathbf{x}_t\not\in\Stable] = p < 1$ where $p=\frac{M+L_V\cdot \Delta}{M+L_V\cdot \Delta+\delta}$, for each $\mathbf{x}_0 \in S$.

\medskip\noindent To prove Claim~2, recall that $S = \{\mathbf{x}\in\mathcal{X} \mid V(\mathbf{x})\leq M\}$. Thus, as $V$ is Lipschitz continuous with Lipschitz constant $L_V$ and as $\Delta$ is the maxmial step size of the system, it follows that the value of $V$ upon the agent leaving the set $S$ is $\leq M+L_V\cdot \Delta$. Hence, for the agent to leave $\Stable$ from $\mathbf{x}_0\in S$, it first has to reach a state $\mathbf{x}_1$ with $M<V(\mathbf{x}_1)\leq M+L_V\cdot \Delta$ and then also to reach a state $\mathbf{x}_2\not\in\Stable$ from $\mathbf{x}_1$ without reentering $S$. By condition~$3$ in Definition~3 of sRSMs, we must have $V(\mathbf{x}_2)\geq M+L_V\cdot \Delta + \delta$. Therefore,
\begin{equation*}
\begin{split}
    &\mathbb{P}_{\mathbf{x}_0}\Big[\exists\, t\in\mathbb{N}_0 \text{ s.t. } \mathbf{x}_t\not\in\Stable\Big] \\
    = &\mathbb{P}_{\mathbf{x}_0}\Big[\exists\, t_1, t_2\in\mathbb{N}_0 \text{ s.t. } t_1<t_2 \\
    &\hspace{2cm}\text{ and } M< V(\mathbf{x}_{t_1})\leq M+L_V\cdot \Delta \\
    &\hspace{2cm}\text{ and } V(\mathbf{x}_2)\geq M+L_V\cdot \Delta + \delta \\
    &\hspace{2cm} \text{ with } \mathbf{x}_t\not\in S \text{ for all } t_1 \leq t \leq t_2\Big] \\
    = &\mathbb{P}_{\mathbf{x}_0}\Big[\exists\, t_1\in\mathbb{N}_0 \text{ s.t. } M< V(\mathbf{x}_{t_1})\leq M+L_V\cdot \Delta\Big] \\
    \cdot &\mathbb{P}_{\mathbf{x}_0}\Big[\exists\, t_1, t_2\in\mathbb{N}_0 \text{ s.t. } t_1<t_2 \\
    &\hspace{2cm}\text{ and } M< V(\mathbf{x}_{t_1})\leq M+L_V\cdot \Delta \\
    &\hspace{2cm}\text{ and } V(\mathbf{x}_2)\geq M+L_V\cdot \Delta + \delta \\
    &\hspace{2cm} \text{ with } \mathbf{x}_t\not\in S \text{ for all } t_1 \leq t \leq t_2 \\
    &\hspace{0.3cm}\mid \exists\, t_1\in\mathbb{N}_0 \text{ s.t. } M< V(\mathbf{x}_{t_1})\leq M+L_V\cdot \Delta\Big] \\
    \leq &\mathbb{P}_{\mathbf{x}_0}\Big[\exists\, t_1\in\mathbb{N}_0 \text{ s.t. } M< V(\mathbf{x}_{t_1})\leq M+L_V\cdot \Delta\Big] \\
    \cdot &\sup_{\mathbf{x}_1\in\mathcal{X},\, M< V(\mathbf{x}_{t_1})\leq M+L_V\cdot \Delta} \mathbb{P}_{\mathbf{x}_1}\Big[\exists\, t_2\in\mathbb{N}_0 \text{ s.t. } V(\mathbf{x}_{t_2}) \geq \\
    &M+L_V\cdot \Delta + \delta \text{ and } \mathbf{x}_t\not\in S \text{ for all } 0 \leq t \leq t_2 \Big] \\
    \leq &\sup_{\mathbf{x}_1\in\mathcal{X},\, M< V(\mathbf{x}_{t_1})\leq M+L_V\cdot \Delta} \mathbb{P}_{\mathbf{x}_1}\Big[\exists\, t_2\in\mathbb{N}_0 \text{ s.t. } V(\mathbf{x}_{t_2}) \geq \\
    &M+L_V\cdot \Delta + \delta \text{ and } \mathbf{x}_t\not\in S \text{ for all } 0 \leq t \leq t_2 \Big].
\end{split}
\end{equation*}
The first equality follows by the above observations. The second equality follows by Bayes' rule. The third inequality follows by observing that the trajectory satisfies the Markov property and therefore that the supremum value of $V$ upon visiting a state does not depend on previously visited states. Finally, the fourth inequality follows since the value of the first probability term is $\leq 1$.

Thus, to prove that $\mathbb{P}_{\mathbf{x}_0}[\exists\, t\in\mathbb{N}_0 \text{ s.t. } \mathbf{x}_t\not\in\Stable] = p < 1$ with $p=\frac{M+L_V\cdot \Delta}{M+L_V\cdot \Delta+\delta}$ and therefore conclude Claim~2, it suffices to prove that, for each $\mathbf{x}_1\in\mathcal{X}$ with $M< V(\mathbf{x}_{t_1})\leq M+L_V\cdot \Delta$, we have
\begin{equation*}
\begin{split}
\mathbb{P}_{\mathbf{x}_1}\Big[&\exists\, t_2\in\mathbb{N}_0 \text{ s.t. } V(\mathbf{x}_{t_2})\geq M+L_V\cdot \Delta + \delta  \text{ and } \mathbf{x}_t\not\in S\\
&\text{ for all } 0 \leq t \leq t_2 \Big] \leq \frac{M+L_V\cdot \Delta}{M+L_V\cdot \Delta+\delta}.
\end{split}
\end{equation*}
To prove this, consider now the probability space $(\Omega_{\mathbf{x}_1},\mathcal{F}_{\mathbf{x}_1},\mathbb{P}_{\mathbf{x}_1})$ of all trajectories that start in $\mathbf{x}_1$, the canonical filtration $(\mathcal{F}_i)_{i=0}^\infty$ and the stopping time $T_S$ with respect to it, and define a stochastic process $(X_i)_{i=0}^\infty$ in the probability space via
\begin{equation*}
    X_i(\rho) = \begin{cases}
    V(\mathbf{x}_i), &\text{if } i<T_S(\rho) \\
    V(\mathbf{x}_{T_S(\rho)}), &\text{otherwise}
    \end{cases}
\end{equation*}
for each $i\geq 0$ and a trajectory $\rho$ that starts in $\mathbf{x}_1$. The argument analogous to the proof of Claim~1 shows that it is an $\epsilon$-RSM with respect to the stopping time $T_S$. But note that $\sup_{i\geq 0}X_i$ is equal to the supremum value attained by $V$ until the first hitting time of the set $S$. Hence the above inequality follows immediately from Proposition~2 by observing that $\mathbb{E}_{\mathbf{x}_1}[X_0] = V(\mathbf{x}_1) \leq M+L_V\cdot\Delta$ and plugging in $\lambda = M+L_V\cdot \Delta+\delta$. This concludes the proof of Claim~2.

\medskip\noindent{\em Proof that Claim~1 and Claim~2 imply Theorem~\ref{thm:soundness}.} By Claim~1, the agent with probability $1$ converges to $S\subseteq\Stable$ from any initial state. On the other hand, by Claim~2, upon reaching a state in $S$ the probability of leaving $\Stable$ is at most $p<1$. Finally, by Claim~1 again the agent is guaranteed to converge back to $S$ even upon leaving $\Stable$. Hence, due to the system dynamics under a given policy satisfying Markov property, the probability of the agent leaving and reentering $S$ more than $N$ times is bounded from above by $p^N$. Hence, by letting $N\rightarrow \infty$, we conclude that the probability of the agent leaving $\Stable$ and reentering infinitely many times is $0$, so the agent with probability~$1$ eventually enters and $S$ and does not leave $\Stable$ after that. This implies that $\Stable$ is a.s.~asymptotically stable.

\end{proof}

\begin{theorem*}
Let $\epsilon, M, \delta > 0$ and suppose that $V: \mathcal{X} \rightarrow \mathbb{R}$ is an $(\epsilon,M,\delta)$-sRSM for $\Stable$. Let $\Gamma = \sup_{\mathbf{x}\in\Stable}V(\mathbf{x})$ be the supremum of all possible values that $V$ can attain over the stabilizing set $\Stable$. Then, for each initial state $\mathbf{x}_0\in\mathcal{X}$, we have that
\begin{compactenum}
    \item $\mathbb{E}_{\mathbf{x}_0}[\mathsf{Out}_{\Stable}] \leq \frac{V(\mathbf{x}_0)}{\epsilon} + \frac{(M+L_V\cdot \Delta)\cdot (\Gamma + L_V\cdot \Delta)}{\delta\cdot\epsilon}$.
    \item $\mathbb{P}_{\mathbf{x}_0}[\mathsf{Out}_{\Stable} \geq t] \leq \frac{V(\mathbf{x}_0)}{t\cdot\epsilon} + \frac{(M+L_V\cdot \Delta)\cdot (\Gamma + L_V\cdot \Delta)}{\delta\cdot\epsilon\cdot t}$, for any time $t\in\mathbb{N}$.
\end{compactenum}
\end{theorem*}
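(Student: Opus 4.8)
The plan is to split every trajectory at the first time $T_S$ it enters the set $S=\{\mathbf{x}\in\mathcal{X}\mid V(\mathbf{x})\le M\}$, and to bound the expected number of steps spent outside $\Stable$ before $T_S$ and after $T_S$ separately. Accordingly, write $\mathsf{Out}_{\Stable}=\mathsf{Out}^{<}+\mathsf{Out}^{\ge}$, where $\mathsf{Out}^{<}$ counts the steps $t<T_S$ with $\mathbf{x}_t\notin\Stable$ and $\mathsf{Out}^{\ge}$ the steps $t\ge T_S$ with $\mathbf{x}_t\notin\Stable$. Since trivially $\mathsf{Out}^{<}\le T_S$, and since the process $(X_i)_{i=0}^\infty$ of eq.~\eqref{eq:processmain} is an $\eps$-RSM with respect to $T_S$ (this is exactly what the proof of Theorem~\ref{thm:soundness} establishes; if $\mathbf{x}_0\in S$ then $T_S=0$ and $\mathsf{Out}^{<}=0$), Proposition~\ref{prop:rsm}(2) gives $\mathbb{E}_{\mathbf{x}_0}[\mathsf{Out}^{<}]\le\mathbb{E}_{\mathbf{x}_0}[T_S]\le\mathbb{E}[X_0]/\eps=V(\mathbf{x}_0)/\eps$. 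This already yields the first term in both claimed bounds.

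For $\mathsf{Out}^{\ge}$ I would decompose the portion of the trajectory after $T_S$ into maximal \emph{excursions outside $S$}, i.e.\ maximal time intervals during which $\mathbf{x}_t\notin S$. Because $\mathbf{x}_t\notin\Stable$ implies $V(\mathbf{x}_t)\ge M+L_V\cdot\Delta+\delta>M$ and hence $\mathbf{x}_t\notin S$, every out-of-$\Stable$ step after $T_S$ lies inside some such excursion, and only excursions that actually leave $\Stable$ can contribute. Let $N\in\mathbb{N}_0\cup\{\infty\}$ be the number of excursions (after $T_S$) that leave $\Stable$. The crucial observation is that $N$ has a geometric tail: each $\Stable$-leaving excursion returns to $S$ almost surely by Claim~1 in the proof of Theorem~\ref{thm:soundness}, and whenever the trajectory is at a state of $S$ at a stopping time $\sigma$, the strong Markov property combined with Claim~2 in the proof of Theorem~\ref{thm:soundness} shows that the conditional probability of ever leaving $\Stable$ again is at most $p=\frac{M+L_V\cdot\Delta}{M+L_V\cdot\Delta+\delta}$. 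Applying this at $T_S$ and then inductively at the return time to $S$ of each $\Stable$-leaving excursion yields $\mathbb{P}_{\mathbf{x}_0}[N\ge j]\le p^j$ for every $j\in\mathbb{N}$.

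It remains to bound the expected number of out-of-$\Stable$ steps contained in one $\Stable$-leaving excursion. For $j\ge1$, let $\sigma'_j$ be the first time at which the $j$-th $\Stable$-leaving excursion is outside $\Stable$; then $\sigma'_j$ is a stopping time with $\{N\ge j\}=\{\sigma'_j<\infty\}\in\mathcal{F}_{\sigma'_j}$, and on $\{N\ge j\}$ the number $\hat O_j$ of out-of-$\Stable$ steps in that excursion is at most the number $H_j$ of steps from $\sigma'_j$ until the next visit to $S$. The predecessor of $\mathbf{x}_{\sigma'_j}$ lies in $\Stable$ (just before $\sigma'_j$ the excursion is in $\Stable\setminus S$, or in $S$), so Lipschitz continuity of $V$ and the maximal-step-size bound give $V(\mathbf{x}_{\sigma'_j})\le\Gamma+L_V\cdot\Delta$; and by the strong Markov property at $\sigma'_j$ and Proposition~\ref{prop:rsm}(2) applied to the $\eps$-RSM of eq.~\eqref{eq:processmain} started at $\mathbf{x}_{\sigma'_j}$, we get $\mathbb{E}_{\mathbf{x}_0}[H_j\mid\mathcal{F}_{\sigma'_j}]\le V(\mathbf{x}_{\sigma'_j})/\eps\le(\Gamma+L_V\cdot\Delta)/\eps$ on $\{N\ge j\}$. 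Setting $\hat O_j=0$ on $\{N<j\}$ and using the tower property, $\mathbb{E}_{\mathbf{x}_0}[\hat O_j]\le\mathbb{E}_{\mathbf{x}_0}\big[\mathbb{I}_{N\ge j}\cdot(\Gamma+L_V\cdot\Delta)/\eps\big]\le p^j\,(\Gamma+L_V\cdot\Delta)/\eps$. Summing over $j$ (using Tonelli), $\mathbb{E}_{\mathbf{x}_0}[\mathsf{Out}^{\ge}]=\sum_{j\ge1}\mathbb{E}_{\mathbf{x}_0}[\hat O_j]\le\frac{\Gamma+L_V\cdot\Delta}{\eps}\cdot\frac{p}{1-p}=\frac{(M+L_V\cdot\Delta)(\Gamma+L_V\cdot\Delta)}{\delta\cdot\eps}$, where the last equality uses $\frac{p}{1-p}=\frac{M+L_V\cdot\Delta}{\delta}$. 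Adding the two contributions gives part~1, and part~2 follows immediately by Markov's inequality, $\mathbb{P}_{\mathbf{x}_0}[\mathsf{Out}_{\Stable}\ge t]\le\mathbb{E}_{\mathbf{x}_0}[\mathsf{Out}_{\Stable}]/t$.

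The step I expect to be the main obstacle is making the excursion decomposition rigorous: defining the relevant stopping times precisely, verifying measurability facts such as $\{N\ge j\}\in\mathcal{F}_{\sigma'_j}$, confirming that every $\Stable$-leaving excursion returns to $S$ almost surely, and invoking the strong Markov property cleanly enough that Claims~1 and~2 of Theorem~\ref{thm:soundness}, which are stated for trajectories from a fixed initial state, can be re-applied at these stopping times. Once that scaffolding is set up, the remaining ingredients — Proposition~\ref{prop:rsm}(2) for hitting-time expectations, the geometric tail of $N$, and Markov's inequality — are routine.
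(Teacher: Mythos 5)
Your proof is correct and follows essentially the same route as the paper: both split $\mathsf{Out}_{\Stable}$ at the first hitting time $T_S$ of $S=\{\mathbf{x}\mid V(\mathbf{x})\le M\}$, bound $\mathbb{E}_{\mathbf{x}_0}[T_S]\le V(\mathbf{x}_0)/\epsilon$ via Proposition~\ref{prop:rsm}, and control the post-$T_S$ contribution using the escape probability $p=\frac{M+L_V\cdot\Delta}{M+L_V\cdot\Delta+\delta}$ from Claim~2 of Theorem~\ref{thm:soundness} together with the observation that $V\le\Gamma+L_V\cdot\Delta$ at the first state outside $\Stable$. The only difference is presentational: the paper derives the recursive inequality $\sup_{\mathbf{x}}\mathbb{E}_{\mathbf{x}}[\mathsf{OutAfter}_{\Stable}]\le p\cdot\bigl((\Gamma+L_V\cdot\Delta)/\epsilon+\sup_{\mathbf{x}}\mathbb{E}_{\mathbf{x}}[\mathsf{OutAfter}_{\Stable}]\bigr)$ and solves it algebraically, whereas you unroll the same recursion into an explicit geometric series over excursions, which incidentally sidesteps the paper's implicit assumption that the supremum being rearranged is finite.
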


\begin{proof}

We start by proving the first item in Theorem~\ref{thm:bound}. Let $\rho=(\mathbf{x}_t,\mathbf{u}_t,\omega_t)_{t\in\mathbb{N}_0}$ be a system trajectory. Recall that $S = \{\mathbf{x}\in\mathcal{X} \mid V(\mathbf{x})\leq M\}\subseteq\Stable$ and that $T_{S}(\rho)=\inf\{t\in\mathbb{N}_0\mid \mathbf{x}_t\in\Stable\}$
is the first hitting time of $S$. Let us also denote by $\mathsf{OutAfter}_{\Stable}(\rho)=|\{t > T_{S}(\rho) \mid \mathbf{x}_t\not\in\Stable \}|$ the number of time-steps that the trajectory $\rho$ is in states outside of the stabilizing set $\Stable$ after the first hitting time of $S$. Then, since $S\subseteq\Stable$, for each system trajectory $\rho=(\mathbf{x}_t,\mathbf{u}_t,\omega_t)_{t\in\mathbb{N}_0}$ we have that
\[ \mathsf{Out}_{\Stable}(\rho) \leq T_{S}(\rho) + \mathsf{OutAfter}_{\Stable}(\rho). \]
Therefore, for each initial state $\mathbf{x}_0\in\mathcal{X}$, we have
\begin{equation}\label{eq:1}
\begin{split}
    \mathbb{E}_\mathbf{x_0}[\mathsf{Out}_{\Stable}] &\leq \mathbb{E}_\mathbf{x_0}[T_{S}] + \mathbb{E}_\mathbf{x_0}[\mathsf{OutAfter}_{\Stable}] \\
    &\leq \mathbb{E}_\mathbf{x_0}[T_{S}] + \sup_{\mathbf{x}\in\mathcal{X}}\mathbb{E}_\mathbf{x}[\mathsf{OutAfter}_{\Stable}].
\end{split}
\end{equation}

Now, by defining an $\eps$-RSM $(X_i)_{i=0}^\infty$ with respect to the stopping time $T_S$ analogously as in the proof of Theorem~\ref{thm:soundness} and by applying the second item in Proposition~1 to it, we can immediately deduce that
\begin{equation}\label{eq:2}
    \mathbb{E}_\mathbf{x_0}[T_{S}] \leq \frac{\mathbb{E}_\mathbf{x_0}[X_0]}{\epsilon} = \frac{V(\mathbf{x}_0)}{\epsilon}.
\end{equation}

On the other hand, by Claim~2 in the proof of Theorem~\ref{thm:soundness} we know that the probability of leaving $\Stable$ once in $S$ is at most $p = \frac{M+L_V\cdot \Delta}{M+L_V\cdot \Delta+\delta} < 1$. Furthermore, once the stabilizing set $\Stable$ is left, we know that the value of $V$ is at most $\sup_{\mathbf{x}\in\Stable}V(\mathbf{x}) + L_V\cdot\Delta = \Gamma + L_V\cdot\Delta$ due to $L_V$ being the Lipschitz constant of $V$ and $\Delta$ being the maximum step size of the system. Thus, we have
\begin{equation*}
\begin{split}
     &\sup_{\mathbf{x}\in\mathcal{X}}\mathbb{E}_\mathbf{x}[\mathsf{OutAfter}_{\Stable}] \\
     &\leq p \cdot \Big( \sup_{\mathbf{x}\in\mathcal{X} \text{ s.t. } V(\mathbf{x}) \leq \Gamma + L_V\cdot\Delta} \mathbb{E}_\mathbf{x}[T_{S}] + \sup_{\mathbf{x}\in\mathcal{X}}\mathbb{E}_\mathbf{x}[\mathsf{OutAfter}_{\Stable}] \Big) \\
     &\leq p\cdot\Big( \frac{\Gamma + L_V\cdot \Delta}{\epsilon} + \sup_{\mathbf{x}\in\mathcal{X}}\mathbb{E}_\mathbf{x}[\mathsf{OutAfter}_{\Stable}] \Big),
\end{split}
\end{equation*}
where in the second inequality we again use the second item in Proposition~1 but now applied to the $\eps$-RSM $(X_i)_{i=0}^\infty$ with respect to the stopping time $T_S$ defined in the probability space of all system trajectories that start in the initial state $\mathbf{x}$. Hence, by deducting $p \cdot \sup_{\mathbf{x}\in\mathcal{X}}\mathbb{E}_\mathbf{x}[\mathsf{OutAfter}_{\Stable}]$ from both sides of the inequality and then dividing both sides of the resulting inequality by $1-p>0$, we conclude that
\[ \sup_{\mathbf{x}\in\mathcal{X}}\mathbb{E}_\mathbf{x}[\mathsf{OutAfter}_{\Stable}] \leq \frac{p\cdot (\Gamma + L_V\cdot \Delta)}{(1-p)\cdot\epsilon}. \]
Therefore, since $p = \frac{M+L_V\cdot \Delta}{M+L_V\cdot \Delta+\delta}$, we deduce that
\begin{equation}\label{eq:3}
    \sup_{\mathbf{x}\in\mathcal{X}}\mathbb{E}_\mathbf{x}[\mathsf{OutAfter}_{\Stable}] \leq \frac{(M+L_V\cdot \Delta)\cdot (\Gamma + L_V\cdot \Delta)}{\delta\cdot\epsilon}.
\end{equation}
By comgining eq.~\ref{eq:1},~\ref{eq:2} and~\ref{eq:3}, we deduce the first item in Theorem~\ref{thm:bound}.

The second item in Theorem~\ref{thm:bound} follows immediately from the first item in Theorem~\ref{thm:bound} and an application of Markov's inequality which implies that $\mathbb{P}_{\mathbf{x}_0}[\mathsf{Out}_{\Stable} \geq t] \leq \frac{\mathbb{E}_{\mathbf{x}_0}[\mathsf{Out}_{\Stable}]}{t}$ for any $t>0$.

\end{proof}



\section{Regularization Terms}\label{sec:regularization}
Here, we provide details on the two regularization objectives that we add to the training loss.

\paragraph{Global minimum regularization}
We add the term $\loss_{\text{< M}}(\theta, \nu)$
to the loss function, which is an auxiliary loss guiding the learner towards learning an sRSM candidate $V_{\nu}$ that attains the global minimum in the set $\{\mathbf{x}\in\mathcal{X}\mid V(\mathbf{x}) < M\}$. 
In particular, we impose a set $T \subseteq \Stable$ to have value $< M$ and the global minimum of the sRSM being in $T$. 
While this loss term does not enforce any of the conditions in Definition 3 directly, we observe that it helps our learning process. It is defined via
\begin{equation*}
\begin{split}
    \loss_{<M}(\theta, \nu) &= \max\{\max_{x_1,\dots x_{N_3} \in \mathcal{D}_{<M}} V_\nu(x) - M, 0\} +\\
    &\max\{\min_{x_1,\dots x_{N_4} \in \mathcal{X}} V_\nu(x) - \min_{x_1,\dots x_{N_3} \in \mathcal{D}_{<M}} V_\nu(x), 0\}.
\end{split}
\end{equation*}
where $\mathcal{D}_{<M}$ is a set of states at which the sRSM canidate learned in the previous learning iteration is $<M$ and $N_3$ and $N_4$ are algorithm parameters.

\paragraph{Lipschitz regularization}
We regularize Lipschitz bounds of $V_\nu$ and $\pi_\theta$ during trainin by adding the regularization term
\begin{equation}
    \lambda (\loss_{\text{Lipschitz}}(\theta) + \loss_{\text{Lipschitz}}(\nu) ) + \alpha \loss'_{\text{Lipschitz}}(\nu),
\end{equation}
to the training objective, with 
\begin{equation*}
    \loss_{\text{Lipschitz}}(\phi) = \max\Big\{  \prod_{W,b \in \phi} \max_j \sum_{i} |W_{i,j}| - \rho, 0 \Big\}
\end{equation*}
and 
\begin{equation*}
    \loss'_{\text{Lipschitz}}(\phi) = \min\Big\{  \prod_{W,b \in \phi} \max_j \sum_{i} |W_{i,j}| - \rho', 0 \Big\}.
\end{equation*}

\section{Proof of Theorem~3}\label{app:thmthreeproof}

\begin{theorem*}[Algorithm correctness]
Suppose that the verifier shows that $V_{\nu}$ satisfies \eqref{eq:expdecstricter} for each $\tilde{\mathbf{x}} \in \tilde{\mathcal{X}}_{\geq M}$ and \eqref{eq:stablecond} for each $\text{cell}\in\text{Cells}_{\mathcal{X}\backslash\Stable}$, so Algorithm~\ref{alg:algorithm} returns $\pi_\theta$ and $V_{\nu}$. Then $V_{\nu}$ is an sRSM and $\Stable$ is a.s.~asymptotically stable under~$\pi_{\theta}$.
\end{theorem*}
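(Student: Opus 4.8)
The plan is to show that, under the verifier's hypotheses, the returned function $V_\nu$ satisfies all three conditions of Definition~\ref{def:stabilizingrsm} for the policy $\pi_\theta$, with $M=1$ (fixed by the algorithm), with $\delta := \min_{\text{cell}\in\text{Cells}_{\mathcal{X}\backslash\Stable}}\{\underline{V}_{\nu}(\text{cell}) - M - L_V\cdot\Delta_\theta\}$, with some $\epsilon>0$ extracted below, and with the role of the maximal step size played by $\Delta_\theta$ --- a valid step-size bound for the fixed policy $\pi_\theta$, which is all that the proof of Theorem~\ref{thm:soundness} actually uses. Condition~1 (nonnegativity) is immediate because the output layer of $V_\nu$ is a softplus. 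Once Conditions~2 and~3 are established, Theorem~\ref{thm:soundness} applied with the fixed policy $\pi_\theta$ and step-size bound $\Delta_\theta$ yields that $\Stable$ is a.s.~asymptotically stable under $\pi_\theta$.

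Condition~3 is the easy direction. Fix $\mathbf{x}\in\mathcal{X}\backslash\Stable$. Since the rectangular grid covers $\mathcal{X}$, $\mathbf{x}$ lies in some grid cell, and that cell meets $\mathcal{X}\backslash\Stable$, hence lies in $\text{Cells}_{\mathcal{X}\backslash\Stable}$. By soundness of IA-AI, $V_\nu(\mathbf{x})\geq\underline{V}_{\nu}(\text{cell})$, and \eqref{eq:stablecond} gives $\underline{V}_{\nu}(\text{cell}) > M+L_V\cdot\Delta_\theta$; combined with the definition of $\delta$ this yields $V_\nu(\mathbf{x})\geq M+L_V\cdot\Delta_\theta+\delta$. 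As $\delta$ is a minimum over finitely many cells each with a strictly positive slack, $\delta>0$, so Condition~3 holds.

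Condition~2 is the crux, since the verifier only checks the stricter inequality \eqref{eq:expdecstricter} at the finitely many grid-cell centers in $\widetilde{\mathcal{X}}_{\geq M}$. Write $g(\mathbf{x}) = \mathbb{E}_{\omega\sim d}[V_\nu(f(\mathbf{x},\pi_\theta(\mathbf{x}),\omega))] - V_\nu(\mathbf{x})$. Given any $\mathbf{x}$ with $V_\nu(\mathbf{x})\geq M$, let $\widetilde{\mathbf{x}}$ be the center of the grid cell containing $\mathbf{x}$; by the (sound) construction of $\widetilde{\mathcal{X}}_{\geq M}$ this center lies in $\widetilde{\mathcal{X}}_{\geq M}$, and $||\mathbf{x}-\widetilde{\mathbf{x}}||_1 < \tau$ since $\tau$ is the mesh. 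For each fixed disturbance $\omega$, chaining the Lipschitz constants of $f$, $\pi_\theta$ and $V_\nu$ (with respect to the $\ell_1$ norms on the respective spaces) gives $||f(\mathbf{x},\pi_\theta(\mathbf{x}),\omega)-f(\widetilde{\mathbf{x}},\pi_\theta(\widetilde{\mathbf{x}}),\omega)||_1 \leq L_f(L_\pi+1)\cdot||\mathbf{x}-\widetilde{\mathbf{x}}||_1$, hence $|V_\nu(f(\mathbf{x},\pi_\theta(\mathbf{x}),\omega))-V_\nu(f(\widetilde{\mathbf{x}},\pi_\theta(\widetilde{\mathbf{x}}),\omega))| \leq L_V L_f(L_\pi+1)\cdot||\mathbf{x}-\widetilde{\mathbf{x}}||_1$, and this bound is uniform in $\omega$, so it survives taking $\mathbb{E}_{\omega\sim d}$. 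Adding $|V_\nu(\mathbf{x})-V_\nu(\widetilde{\mathbf{x}})|\leq L_V||\mathbf{x}-\widetilde{\mathbf{x}}||_1$ yields $|g(\mathbf{x})-g(\widetilde{\mathbf{x}})| \leq L_V(L_f(L_\pi+1)+1)\cdot||\mathbf{x}-\widetilde{\mathbf{x}}||_1 = K\cdot||\mathbf{x}-\widetilde{\mathbf{x}}||_1 < K\tau$, with $K$ exactly the constant set on line~11 of Algorithm~\ref{alg:algorithm}. Since the verifier establishes \eqref{eq:expdecstricter} at $\widetilde{\mathbf{x}}$ via a sound upper bound on the expected value, \eqref{eq:expdecstricter} holds for the true expectation as well, so that $g(\widetilde{\mathbf{x}}) < -\tau K$; therefore $g(\mathbf{x}) < 0$ for every $\mathbf{x}$ with $V_\nu(\mathbf{x})\geq M$. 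Finally, $\{\mathbf{x}\in\mathcal{X}\mid V_\nu(\mathbf{x})\geq M\}$ is compact (closed as a preimage of a closed set under the continuous $V_\nu$, bounded as a subset of the compact $\mathcal{X}$), and $g$ is continuous --- in fact $K$-Lipschitz, by the same estimate applied with $\widetilde{\mathbf{x}}$ replaced by an arbitrary point --- so $g$ attains its maximum $-\epsilon$ on this set with $\epsilon>0$. Hence $\mathbb{E}_{\omega\sim d}[V_\nu(f(\mathbf{x},\pi_\theta(\mathbf{x}),\omega))] \leq V_\nu(\mathbf{x})-\epsilon$ whenever $V_\nu(\mathbf{x})\geq M$, which is Condition~2.

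With Conditions~1--3 in hand, $V_\nu$ is an $(\epsilon,M,\delta)$-sRSM for $\Stable$ under $\pi_\theta$, and Theorem~\ref{thm:soundness} closes the argument. The main obstacle is the Condition~2 step: one must propagate the discretization-vertex check to all of $\{\mathbf{x}\in\mathcal{X}\mid V_\nu(\mathbf{x})\geq M\}$ via Lipschitz continuity --- carefully matching the composite constant $K$ used by the verifier and being precise about which $\ell_1$ metrics are in play on $\mathcal{X}$, $\mathcal{U}$ and the product space --- justify that the expected-value map inherits the Lipschitz bound uniformly in $\omega$ (so that both the vertex-to-state propagation and the compactness-based extraction of a uniform $\epsilon$ go through), and confirm the covering property of $\widetilde{\mathcal{X}}_{\geq M}$, namely that every state at which Condition~2 is required lies within one mesh width of a checked vertex.
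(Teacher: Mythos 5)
Your proof is correct and follows essentially the same route as the paper's: softplus for Condition~1, soundness of IA-AI for Condition~3 with the same choice of $\delta$, and propagation of the vertex check \eqref{eq:expdecstricter} to all of $\{\mathbf{x}\mid V_\nu(\mathbf{x})\geq M\}$ via the composite Lipschitz constant $K=L_V(L_f(L_\pi+1)+1)$, followed by an appeal to Theorem~\ref{thm:soundness}. The only cosmetic difference is that you extract $\epsilon>0$ by compactness of $\{V_\nu\geq M\}$ and continuity of $g$ (the route sketched in the paper's Section~\ref{sec:verifier}), whereas the appendix proof takes $\epsilon$ directly as the minimum over the finitely many checked vertices of the slack $V_\nu(\tilde{\mathbf{x}})-\tau K-\mathbb{E}_{\omega\sim d}[V_\nu(f(\tilde{\mathbf{x}},\pi_\theta(\tilde{\mathbf{x}}),\omega))]$; both yield the same conclusion.
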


\begin{proof}
To prove the theorem, we first need to show that $V_{\nu}$ satisfies the three conditions in Definition~3.

Condition~1 in Definition~3 is satisfied by default since $V_{\nu}$ applies the softplus activation function to its output which ensures nonnegativity.

To deduce condition~2 in Definition~3, we need to show that there exists $\epsilon>0$ such that for each $\mathbf{x}\in\mathcal{X}$ with $V_{\nu}(\mathbf{x}) \geq M$ we have
    \begin{equation*}
    \mathbb{E}_{\omega\sim d}\Big[ V_{\nu} \Big( f(\mathbf{x}, \pi(\mathbf{x}), \omega) \Big) \Big] \leq V(\mathbf{x}) - \epsilon.
    \end{equation*}
We show that
\[ \epsilon = \min_{\tilde{\mathbf{x}}\in \tilde{\mathcal{X}}_{\geq M}} \Big( V(\tilde{\mathbf{x}}) - \tau \cdot K - \mathbb{E}_{\omega\sim d}\Big[ V \Big( f(\tilde{\mathbf{x}}, \pi(\tilde{\mathbf{x}}), \omega) \Big) \Big] \Big) \]
satisfies this requirement. Fix $\mathbf{x}\in\mathcal{X}$ with $V_{\nu}(\mathbf{x}) \geq M$ and let $\tilde{\mathbf{x}}\in\tilde{\mathcal{X}}$ be such that $||\mathbf{x}-\tilde{\mathbf{x}}||_1 \leq \tau$. Such $\tilde{\mathbf{x}}$ exists by definition of a discretization. Furthremore, since $V_{\nu}(\mathbf{x}) \geq M$, the center of the cell that contains $\mathbf{x}$ must be contained in $\tilde{\mathcal{X}}_{\geq M}$ so therefore we may pick such $\tilde{\mathbf{x}}\in\tilde{\mathcal{X}}_{\geq M}$ (the correctness of the computation of $\tilde{\mathcal{X}}_{\geq M}$ follows from the correctness of IA-AI~\cite{CousotC77,Gowal18}). Then, by Lipschitz continuity of $f$, $\pi_{\theta}$ and $V_{\nu}$, we have that
\begin{equation}\label{eq:long}
\begin{split}
    &\mathbb{E}_{\omega\sim d}\Big[ V_{\nu} \Big( f(\mathbf{x}, \pi_{\theta}(\mathbf{x}), \omega) \Big) \Big] \\
    &\leq \mathbb{E}_{\omega\sim d}\Big[ V_{\nu} \Big( f(\tilde{\mathbf{x}}, \pi_{\theta}(\tilde{\mathbf{x}}), \omega) \Big) \Big] \\
    &+ ||f(\tilde{\mathbf{x}}, \pi_{\theta}(\tilde{\mathbf{x}}), \omega) - f(\mathbf{x}, \pi(\mathbf{x}), \omega)||_1 \cdot L_V \\
    &\leq \mathbb{E}_{\omega\sim d}\Big[ V_{\nu} \Big( f(\tilde{\mathbf{x}}, \pi_{\theta}(\tilde{\mathbf{x}}), \omega) \Big) \Big] \\
    &+ ||(\tilde{\mathbf{x}}, \pi_{\theta}(\tilde{\mathbf{x}}), \omega) - (\mathbf{x}, \pi(\mathbf{x}), \omega)||_1 \cdot L_V\cdot L_f \\
    &\leq \mathbb{E}_{\omega\sim d}\Big[ V_{\nu} \Big( f(\tilde{\mathbf{x}}, \pi_{\theta}(\tilde{\mathbf{x}}), \omega) \Big) \Big] \\
    &+ ||\tilde{\mathbf{x}} - \mathbf{x}||_1 \cdot L_V\cdot L_f \cdot (1 + L_{\pi}) \\
    &\leq \mathbb{E}_{\omega\sim d}\Big[ V_{\nu} \Big( f(\tilde{\mathbf{x}}, \pi_{\theta}(\tilde{\mathbf{x}}), \omega) \Big) \Big] \\
    &+ \tau \cdot L_V\cdot L_f \cdot (1 + L_{\pi}),
\end{split}
\end{equation}
On the other hand, by Lipschitz continuity of $V_{\nu}$ we have
\begin{equation}\label{eq:short}
    V_{\nu}(\mathbf{x}) \geq V_{\nu}(\tilde{\mathbf{x}}) - ||\tilde{\mathbf{x}} - \mathbf{x}||_1 \cdot L_V \geq V_{\nu}(\tilde{\mathbf{x}}) - \tau\cdot L_V.
\end{equation}
Thus combining eq.(\ref{eq:long}) and (\ref{eq:short}) we get that
\begin{equation}
\begin{split}
    &V_{\nu}(\mathbf{x}) - \mathbb{E}_{\omega\sim d}\Big[ V_{\nu} \Big( f(\mathbf{x}, \pi_{\theta}(\mathbf{x}), \omega) \Big) \Big] \\
    &\geq V_{\nu}(\tilde{\mathbf{x}}) - \tau\cdot L_V - \mathbb{E}_{\omega\sim d}\Big[ V_{\nu} \Big( f(\tilde{\mathbf{x}}, \pi_{\theta}(\tilde{\mathbf{x}}), \omega) \Big) \Big] \\
    &- \tau \cdot L_V\cdot L_f \cdot (1 + L_{\pi}) \\
    &= V_{\nu}(\tilde{\mathbf{x}}) - \tau\cdot K - \mathbb{E}_{\omega\sim d}\Big[ V_{\nu} \Big( f(\tilde{\mathbf{x}}, \pi_{\theta}(\tilde{\mathbf{x}}), \omega) \Big) \Big] \\
    &\geq \epsilon,
\end{split}
\end{equation}
The last inequality holds by our definition of $\epsilon$, therefore we conclude that $V_{\nu}$ satisfies condition~2 in Definition~3.

Finally, to deduce condition~3 in Definition~3, we need to show that there exists $\delta>0$ such that $V_{\nu}(\mathbf{x})\geq M + L_V\cdot\Delta + \delta$ holds for each $\mathbf{x}\in\mathcal{X}\backslash\Stable$. But the fact that
\[ \delta=\min_{\text{cell}\in\text{Cells}_{\mathcal{X}\backslash\Stable}}\{\underline{V}_{\nu}(\text{cell}) - M - L_V\cdot \Delta_{\theta}\} \]
satisfies the claim follows immediately from correctness of IA-AI and the fact that eq.~(3) holds for each $\text{cell}\in\text{Cells}_{\mathcal{X}\backslash\Stable}$. 

Thus, this concludes the proof that $V_{\nu}$ satisfies the three conditions in Definition~3. Then, by Theorem~\ref{thm:soundness} on sRSMs, we know that $\Stable$ is a.s.~asymptotically stable under $\pi_{\theta}$.
\end{proof}

\section{Experimental evaluation details}\label{app:experimentsdetails}

We implemented our algorithm in JAX. All experiments were run on a 4 CPU-core machine with 64GB of memory and an NVIDIA A10 with 24GB of memory.

\paragraph{Benchmark environments}
The dynamics of the two-dimensional dynamical system (2D system) are defined as

\begin{equation}
\begin{split}
    \mathbf{x}_{t+1} &= \begin{pmatrix}
        1 & 0.0196 \\
        0 & 0.98
    \end{pmatrix}\mathbf{x}_t + \begin{pmatrix}
        0.002 \\
        0.1 
    \end{pmatrix}g(\mathbf{u}_t) \\
    &+ \begin{pmatrix}
        0.002 & 0 \\
        0 & 0.001 
    \end{pmatrix}\omega,
\end{split}
\end{equation}
where $\omega$ is a disturbance vector and $\omega[1], \omega[2] \sim \text{Triangular}$. 
The function $g$ bounds the range of admissible actions by $g(u) = \max(\min(u, 1), -1)$.

The probability density function of $\text{Triangular}$ is defined by 
\begin{equation}
    \text{Triangular}(x) := \begin{cases} 0 & \text{if } x< -1\\ 1 - |x| & \text{if } -1 \leq x \leq 1\\ 0 & \text{otherwise}\end{cases}.
\end{equation}

The dynamics function of the inverted pendulum task is defined as 
\begin{align*}
    \mathbf{x}_{t+1}[2] &:= (1-b)  \mathbf{x}_{t}[2] \\
    &+ d \cdot \big( \frac{-1.5 \cdot G \cdot \text{sin}(\mathbf{x}_{t}[1]+\pi)}{2l} + \frac{3}{m l^2} 2g(\mathbf{u}_t)\big)\\
    &+ 0.002 \omega[1]\\
    \mathbf{x}_{t+1}[1] &:=  \mathbf{x}_{t}[1] + d \cdot \mathbf{x}_{t+1}[2] + 0.005 \omega[2],
\end{align*}
where the parameters $d, G, m, l, b$ are defined in Table \ref{tab:invpend}.
For training a policy on the inverted pendulum task, we used a reward $r_t$ at time $t$ defined by $r_t := 1 - \mathbf{x}_{t}[1]^2 - 0.1 \mathbf{x}_{t}[2]^2$.

\begin{table}
    \centering
    \begin{tabular}{c|c}\toprule
        Parameter & Value  \\\midrule
        $d$ &  0.05\\
        $G$ & 10\\
        $m$ & 0.15\\
        $l$ & 0.5\\
        $b$ & 0.1\\\bottomrule
    \end{tabular}
    \caption{Parameters of the inverted pendulum task.}
    \label{tab:invpend}
\end{table}

The hyperparameters we used in the experiments for learning the policy and the sRSM are listed in Table \ref{tab:hp}.
For each of the tasks, we consider $T = \{x \mid |x_1| \le 0.2, |x_2| \le 0.2\}$.

\begin{table}[!ht]
  \begin{center}
    \begin{tabular}{c|c}
      \toprule
      Parameter & Value\\
      \midrule
      Learning rate & 0.0005 \\
      $\lambda$ & 0.001 \\
      $\alpha$ & 10 \\
      $\rho_\theta$ & 4 \\
      $\rho_\nu$ & 8 \\
      $\rho'$ & 0.01 \\
      $\delta_\text{train}$ & 0.1 \\
      $N_\text{cond 2}$ & 16 \\
      $N_\text{cond 3}$ & 256 \\
      $N_3$ & 256 \\
      $N_4$ & 512 \\
      $\epsilon_\text{train}$ & 0.1 \\
      \bottomrule
    \end{tabular}
    \caption{Hyperparameters used in our experiments.}
        \label{tab:hp}
  \end{center}
\end{table}
We observed a better convergence and more stable training when training only the sRSM candidate and keep the weights of the policy frozen for the first three iterations of our algorithm. For the second task we replaced $\epsilon_\text{train}$ with $K_{\theta, \nu} \cdot \tau$ during the training. Specifically, instead of using $L_\text{cond 2}(\theta, \nu)$, we set

\begin{equation*}
\begin{split}
    &\loss'_{\text{cond~2}}(\theta, \nu) = \frac{1}{|B|}\sum_{\mathbf{x}\in B}\Big( \max\Big\{ \sum_{\omega_1,\dots, \omega_{N_{\text{cond~2}}} \sim d}\\
    &\frac{V_{\nu}\big(f(\mathbf{x},\pi_\theta(\mathbf{x}),\omega_i)\big)}{N_{\text{cond~2}}} -  V_{\nu}(\mathbf{x})  + K_{\theta, \nu} \cdot \tau, 0\Big\} \Big).
\end{split}
\end{equation*}

For the inverted pendulum task, the plots and the results in Table 1 in the main paper are obtained by training with $\loss'_{\text{cond~2}}(\theta, \nu)$ as the loss function. Here, we performed an ablation study to test whether using $\loss'_{\text{cond~2}}(\theta, \nu)$ can improve the results, i.e., whether the number of iterations is decreased. The results in Table 3 show that the effectiveness of using $\loss'_{\text{cond~2}}(\theta, \nu)$ on the particular system.

\begin{table*}
    \centering
    \begin{tabular}{c|c c c c c}
      \toprule
      \multirow{1}{6em}{Environment} & Use $\loss'_{\text{cond~2}}(\theta, \nu)$ & Iterations & Mesh ($\tau$) & $p$ & Runtime\\
      \midrule
      \multirow{2}{5em}{2D system} & No & 5 & 0.0007 & 0.80 & 3660 s\\
      & Yes & 7 & 0.0007 & 0.78 & 4405 s\\
      \hline
      \multirow{2}{8em}{Inverted pendulum} & No &  8 & 0.003 & 0.97 & 7004 s \\
      & Yes & 4  & 0.003 & 0.97 & 2619 s\\
      \bottomrule
    \end{tabular}
    \caption{Ablation analysis of the impact of the loss term $\loss'_{\text{cond~2}}(\theta, \nu)$. Number of learner-verifier loop iterations, mesh of the discretization used by the verifier, $p$, and total algorithm runtime (in seconds).}
        \label{tab:table11}
\end{table*}

\paragraph{Grid refinement}

We implemented two types of grid refinement procedures to refine the mesh of the discretization used by the verifier. The first refinement is scheduled to multiply $\tau$ by 0.5 every second iteration starting at iteration 5 if no \emph{hard violation} is encountered by the verifier module.
A violation is a counterexample to condition 2 in Definition 3 in the main paper. Hard violations are violations that also violate the condition

\begin{equation*}
    \mathbb{E}_{\omega\sim d}\Big[ V\Big( f(\mathbf{x}, \pi(\mathbf{x}), \omega) \Big) \Big] < V(\mathbf{x}).
    \end{equation*}

Our second refinement procedure is invoked when there are violations but no hard violations. In this case, our procedure tries to verify grid cells where violations were observed using a mesh of $0.5\tau$.

\subsection{PPO Details}\label{app:ppo}
The settings used for the PPO \cite{schulman2017proximal} pre-training process are as follows.
In each PPO iteration, 30 episodes of the environment are collected in a training buffer. Stochastic is introduced to the sampling of the policy network $\pi_\mu$ using a Gaussian distributed random variable added to the policy's output, i.e., the policy predicts a Gaussian's mean. 
The standard deviation of the Gaussian is dynamic during the policy training process according to a linear decay starting from 0.5 at first PPO iteration to 0.05 at PPO iteration 50. The advantage values are normalized by subtracting the mean and scaling by the inverse of the standard deviation of the advantage values of the training buffer. The PPO clipping value $\varepsilon$ is 0.2 and $\gamma$ is set to 0.99.
In each PPO iteration, we train the policy for 10 epochs, except for the first iteration where we train the policy for 30 epochs. An epoch accounts to a pass over the entire data in the training buffer, i.e., the data from the the rollout episodes. 
We train the value network 5 epochs, expect in the first PPO iteration, where we train the value network for 10 epochs. The Lipschitz regularization is applied to the learning of the policy parameters during the PPO pre-training.

\section{Additional plots}
In this section, we include an additional plot visualizing the sRSM learned for the 2D system in Figure \ref{fig:lds1}.
\begin{figure}
    \centering
    \includegraphics[width=\linewidth]{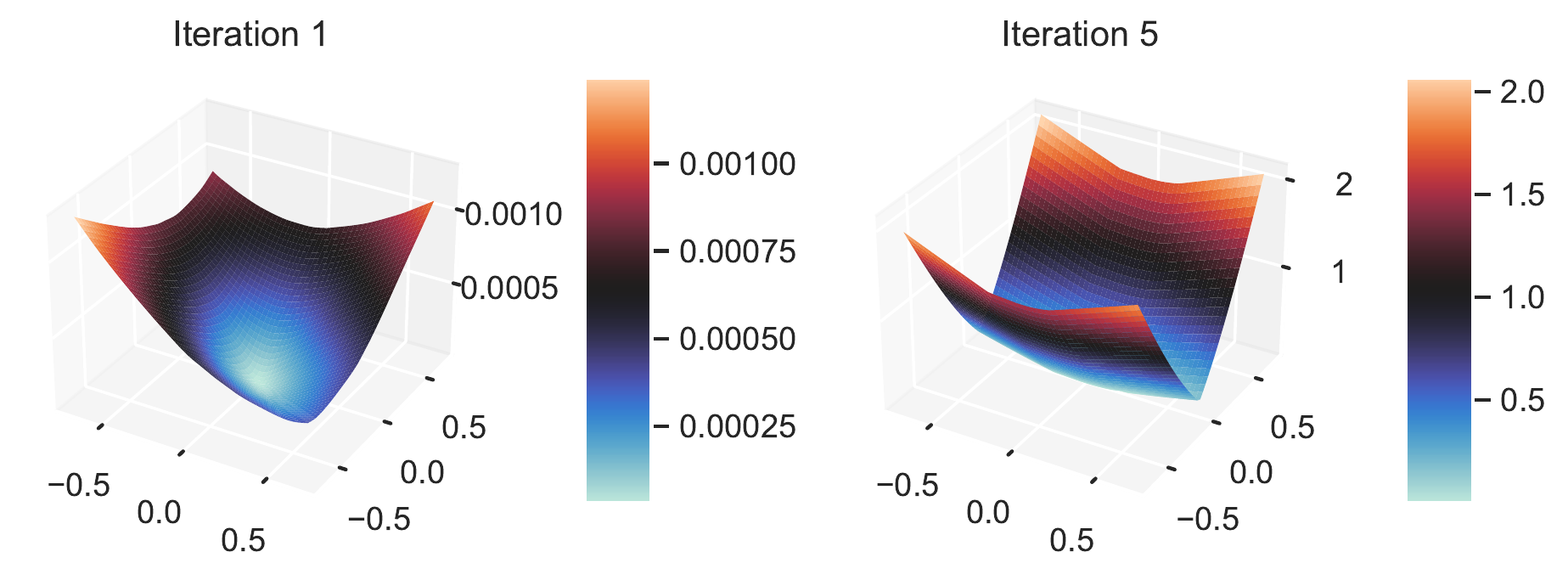}
   \caption{Visualization of the sRSM candidate after 1 and 5 iterations of our algorithm for the 2D system task. The candidate after 1 iteration does not fulfill all sRSM conditions, while the function after 5 learning iterations is a valid sRSM.}
    \label{fig:lds1}
\end{figure}

\end{document}